\documentclass[letterpaper]{article} 
\usepackage[preprint]{aaai2027}  
\usepackage[hyphens]{url}  
\usepackage{graphicx} 
\usepackage{natbib}  
\usepackage{caption} 
\usepackage{amsmath,amssymb,amsthm}
\theoremstyle{definition}

\theoremstyle{plain}
\newtheorem{theorem}{Theorem}
\usepackage{algorithm}
\usepackage{algorithmic}

\usepackage{newfloat}
\usepackage{amsfonts}
\usepackage{listings}
\DeclareCaptionStyle{ruled}{labelfont=normalfont,labelsep=colon,strut=off} 
\lstdefinestyle{cadexample}{%
    basicstyle={\scriptsize\ttfamily},
    columns=fullflexible,
    keepspaces=true,
    showstringspaces=false,
    tabsize=2,
    breaklines=true,
    numbers=none,
    frame=single,
    framerule=0.3pt,
    rulecolor=\color[gray]{0.72},
    backgroundcolor=\color[gray]{0.97},
    xleftmargin=0.4em,
    xrightmargin=0.4em,
    framexleftmargin=0.5em,
    framexrightmargin=0.5em,
    aboveskip=2pt,
    belowskip=2pt
}
\floatstyle{plain}
\newfloat{listing}{tb}{lst}{}
\floatname{listing}{Listing}

\usepackage{booktabs}
\usepackage{array}
\usepackage{fontawesome5}
\usepackage[colorlinks=true,breaklinks=true]{hyperref}
\usepackage{cleveref}
\usepackage{multirow}
\usepackage{subcaption}
\title{\textsc{DeciCAD}: Harness-Grounded Learning of Action Policies for CAD Agents}

\author{
    Yufeng~Peng\textsuperscript{\rm 1},
    Yixin~Zhang\textsuperscript{\rm 1},
    Ziyi~Liu\textsuperscript{\rm 1},
    Zhikun~Xu\textsuperscript{\rm 1},
    Jiarong~Yang\textsuperscript{\rm 1},\\
    Sijie~Wang\textsuperscript{\rm 2},
    Wenjie~Feng\textsuperscript{\rm 1},
    Qiyu Kang\textsuperscript{\rm 1}\setcounter{footnote}{1}\thanks{
        Correspondence to: Qiyu Kang $<$qiyukang@ustc.edu.cn$>$.}
}
\affiliations{
    \textsuperscript{\rm 1}University of Science and Technology of China\\
    \textsuperscript{\rm 2}Nanyang Technological University\\
    \vspace{3pt}
    {\normalsize\color{LinkBlue}\faGithub\hspace{0.4em}%
    \href{https://github.com/PhysAGI/PhysCAD}{\texttt{\textbf{github.com/PhysAGI/PhysCAD}}}}
}

\definecolor{DeciBlue}{HTML}{315B73}
\definecolor{DeciBlueSoft}{HTML}{F1F6F8}
\definecolor{DeciCodeText}{HTML}{24343D}
\definecolor{DeciTeal}{HTML}{287A78}
\definecolor{DeciPlum}{HTML}{84536F}
\lstdefinestyle{suppcode}{
    basicstyle=\fontsize{7.2}{8.4}\selectfont\ttfamily\color{DeciCodeText},
    columns=fullflexible,
    keepspaces=true,
    showstringspaces=false,
    breaklines=true,
    frame=none,
    backgroundcolor=\color{DeciBlueSoft},
    framesep=0pt,
    framexleftmargin=6pt,
    framexrightmargin=6pt,
    xleftmargin=6pt,
    xrightmargin=6pt,
    aboveskip=5pt,
    belowskip=5pt,
    keywordstyle=[1]\bfseries\color{DeciBlue},
    morekeywords=[1]{CADAction,Document,Sketch,Solid,Entity,Assembly,Inspect,
        Extrude,SpatialContract,CreateSketch,SetPlacement,Execute,
        ActionResult,InspectList,InspectGet},
    keywordstyle=[2]\color{DeciTeal},
    morekeywords=[2]{Ref,StableRef,Int,Float,Point2D,Point3D,Placement,
        Frame,Generation,Optional,ArgumentPath,ArtifactPath,
        ActionSpecificEvidence,ShortExplanation},
    keywordstyle=[3]\bfseries\color{DeciPlum},
    morekeywords=[3]{Create,Open,Save,Export,AddGeometry,Constrain,Pattern,
        Trim,Project,CreateBody,Revolve,Sweep,Loft,Boolean,Repeat,Fillet,
        Chamfer,Shell,AddComponent,Ground,AddMate,RemoveMate,
        Solve,List,Get,Add,Cut,Distance,ThroughAll,SketchNormal,Vector3D,
        World,Parent,Success,Failure,InvalidInput,MissingObject,InvalidState,
        GeometryFailure,Conflict,Timeout,BackendUnavailable,OutcomeUnknown,
        RetryWithArguments,InspectState,ChooseAnotherAction,if,else,return}
}

\definecolor{LinkBlue}{rgb}{0.0,0.25,0.65}
\hypersetup{
    linkcolor=black,
    citecolor=black,
    urlcolor=LinkBlue,
    pdftitle={DeciCAD: Harness-Grounded Learning of Action Policies for CAD Agents},
    pdfauthor={Yufeng Peng, Yixin Zhang, Ziyi Liu, Zhikun Xu, Jiarong Yang, Sijie Wang, Wenjie Feng, Qiyu Kang}
}

\begin{document}

\maketitle

\begin{abstract}
    Large Language Model agents operate through closed-loop action policies whose decision space is defined by an agent harness.
    We investigate how this operating space shapes agent behavior and policy learning in computer-aided design (CAD), a professional domain characterized by persistent state, geometric constraints, and long-horizon procedures.
    We instantiate this perspective in \textsc{DeciCAD}, a domain-native harness that exposes semantic CAD actions, decision-oriented observations, environment feedback, and persistent state transitions.
    Building on this harness, we develop a learning pipeline that extracts causal decision supervision from interaction trajectories and improves the policy through supervised fine-tuning (SFT) and geometry-grounded online reinforcement learning (RL).
    Across four CAD benchmarks, \textsc{DeciCAD} achieves the best F1 and intersection over union (IoU) while reducing model-output tokens by 45--88\% relative to a general-purpose coding harness.
    Under matched supervision, its SFT policy improves F1 from 0.6883 to 0.7437 and outperforms the coding policy; subsequent online RL further raises geometry availability from 66.5\% to 73.5\% and F1 to 0.7880.
    These results demonstrate that the decision space induced by a harness materially affects both agent execution and policy learning in CAD.
\end{abstract}

\section{Introduction}

Large Language Models (LLMs) are increasingly being used as agents to operate
professional software and complete long-horizon
tasks~\citep{achiam2023gpt,naveed2025comprehensive,guo2024large}.
Computer-aided design (CAD) is a particularly demanding setting for
such agents~\citep{willis2020fusion360gallery,wu2021deepcad}.
Unlike one-shot content generation, CAD modeling requires a sequence of
state-dependent operations over a persistent design document.
An agent must create and modify sketches, constraints, features, and geometric objects;
inspect intermediate results; recover from execution failures;
and maintain references across long interaction trajectories.
In this process, the model outputs are interpreted as actions:
each action changes the current CAD state, produces execution feedback, and conditions
the next decision~\citep{yao2022react,schick2023toolformer}.
The effectiveness of a CAD agent therefore depends not only on the underlying model
and its reasoning strategy,
but also on how CAD actions, states, and execution outcomes
are represented to the model.

\begin{figure}[t]
    \centering
    \includegraphics[width=\linewidth]{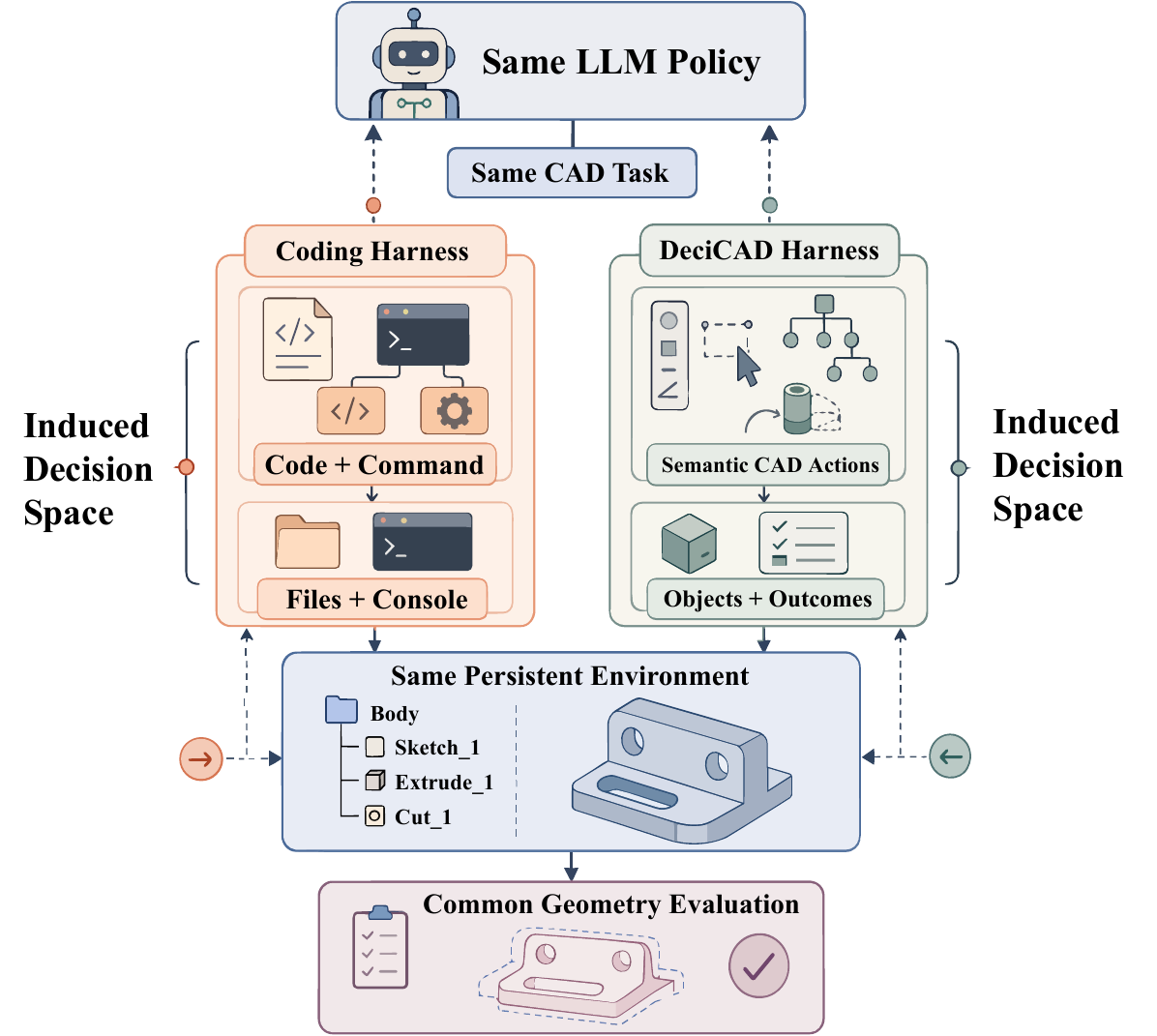}
    \caption{Harness-induced CAD decision spaces under controlled model and
        environment conditions. Given the same LLM policy, CAD task,
        persistent design state, and geometry evaluator, the coding harness
        represents decisions through code and shell commands and returns files
        and console output. \textsc{DeciCAD} instead exposes semantic CAD
        actions and returns object-level state and execution outcomes. The two
        interfaces therefore induce different action--observation histories
        over the same underlying modeling process.}
    \label{fig:harness-induced-decision-space}
\end{figure}

Such a model--environment boundary is defined by an \emph{agent harness}
\citep{yang2024sweagent}. In CAD, a harness determines which design operations the
model can express, which parts of the evolving design state it can inspect, and
how execution outcomes are returned for subsequent decisions.
A general-purpose coding harness may expose the CAD system through programs,
libraries, and runtime feedback, whereas a domain-native harness may expose
structured CAD actions and object-level observations.
Consequently, even when the model, task, and CAD backend remain fixed,
different harnesses can present the model with substantially different
decision spaces, as illustrated in
Figure~\ref{fig:harness-induced-decision-space}.
Existing work has studied CAD code generation, tool-using agents, execution repair,
and interactive modeling~\citep{mallis2024cadassistant,guan2025cadcoder,doris2025cadcoder,
    wang2025cadfusion,barkley2026cadsmith,
    gong-etal-2026-toolcad},
but the harness itself is typically treated as implementation infrastructure
rather than as a variable that shapes both agent execution and policy learning.
This raises a central question:
\emph{How should a CAD agent harness represent actions and states so that it
    preserves the decisions required for complex modeling while avoiding
    representation burden that is irrelevant to the task?}

This trade-off is important in CAD.
A fine-grained coding interface gives the model broad control over the underlying software,
but the same CAD operation can often be expressed through many programs,
command sequences, variable names, temporary objects, and API combinations.
These implementations may produce the same geometric state transition,
but they appear as different generation and supervision targets.
The model must then spend capacity on implementation choices that do not affect the intended state transition.
A semantic interface can remove many of these choices by representing the intended operation directly.
However, excessive compression can hide state differences that require
different actions or remove operations needed for correct modeling and error recovery.
A useful harness must therefore reduce irrelevant representation choices
while preserving the actions and observations needed for correct decisions.

To make this trade-off explicit, we formalize the model-facing interface as a
\emph{Harness-Induced CAD Decision Space}: a harness maps model outputs to
executable operations, maps internal CAD states to model-visible observations,
and returns execution results as feedback, thereby determining which
operations the agent can perform, which state differences and failures it can
identify, and which policies it can execute and learn. We then introduce the
\emph{Semantic Decision Bottleneck} as a design principle for this space: it
compresses behaviorally equivalent implementations and removes state
information that does not affect the next decision, while preserving the
actions and observations needed to reach the target geometry and recover from
errors.

Following this principle, we develop \textsc{DeciCAD}, a domain-native harness
for closed-loop CAD agents. \textsc{DeciCAD} represents task-relevant design
transitions as semantic CAD actions, grounds these actions in a persistent CAD
document, and returns structured execution feedback together with
decision-oriented observations. We further develop a
harness-grounded learning pipeline that reconstructs causally conditioned
action supervision from successful and recovered interaction trajectories,
initializes CAD action policies through supervised fine-tuning (SFT),
and supports online reinforcement learning (RL) using policy-generated
trajectories and geometry-grounded rewards.
Figure~\ref{fig:DeciCAD-overview} summarizes
this execution and learning loop.

Across four CAD benchmarks, \textsc{DeciCAD} consistently produces more
accurate final geometry than competitive CAD agents and a general-purpose
coding harness, while reducing model-output tokens by 45--88\% relative to the
coding harness on text-to-CAD tasks. Observation ablations show that showing more CAD state does not always help: unfiltered state inflates the input context, and coarse feedback strips out what the agent needs in order to recover. With matched training settings, open-weight models trained in the \textsc{DeciCAD} decision space learn stronger long-horizon CAD action policies than the same models trained through code-level interaction.

Our contributions are summarized as follows:
\begin{itemize}
    \item We define the \emph{Harness-Induced CAD Decision Space} and introduce the \emph{Semantic Decision Bottleneck} as a design principle for CAD agent interfaces.
          We design \textsc{DeciCAD}, which combines semantic CAD actions, compact observations, structured feedback, and persistent object state.

    \item We develop a harness-grounded policy-learning pipeline that reconstructs causally conditioned supervision from interaction trajectories, initializes policies through SFT, and supports geometry-grounded online RL.

    \item Experiments across four benchmarks verify
          \textsc{DeciCAD} improves geometric accuracy, token efficiency,
          error recovery, and long-horizon policy learnability significantly.
\end{itemize}

\begin{figure*}[t]
    \centering

    \includegraphics[width=0.8\textwidth]{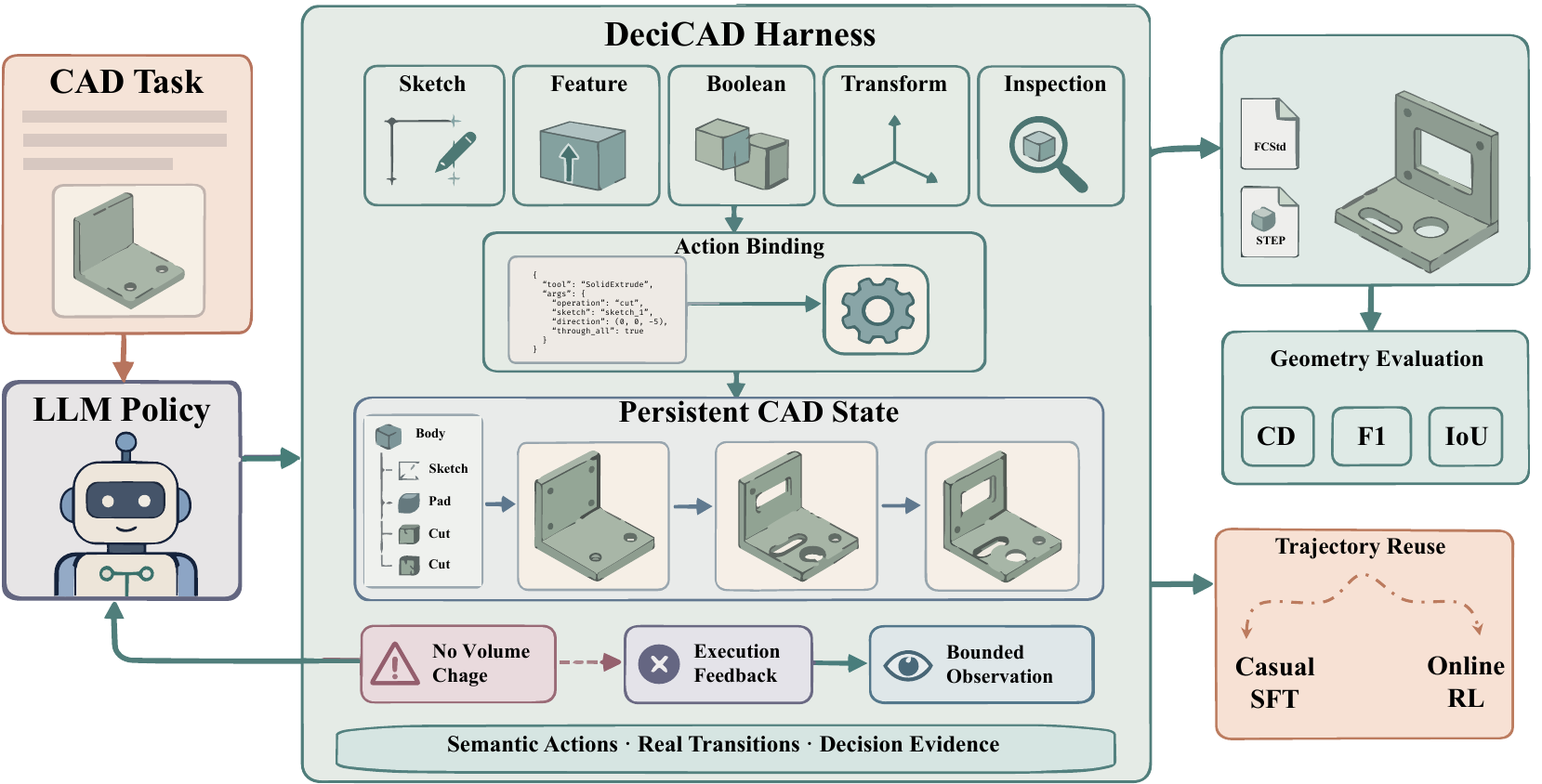}
    \caption{Overview of \textsc{DeciCAD}. Semantic CAD actions are bound to real
        transitions in persistent CAD state, with bounded observations and
        execution feedback; the resulting trajectories support causal SFT and
        geometry-grounded online RL.}
    \label{fig:DeciCAD-overview}
\end{figure*}

\section{Related Work}

Our work connects agent interaction interfaces, decision-preserving
abstraction, and policy learning in interactive CAD environments. We review
these directions through the common question of how an agent's operating
representation affects execution and learning.

\subsection{Agent Interfaces and Decision Abstraction}

Tool-augmented language models connect generation to executable operations
\citep{parisi2022talm,karpas2022mrkl,schick2023toolformer}, while API-Bank,
Gorilla, and ToolLLM extend tool learning to executable dialogues and large
API ecosystems \citep{li2023apibank,patil2023gorilla,qin2023toolllm}. ReAct
interleaves reasoning, actions, and observations \citep{yao2022react};
SWE-agent, WebArena, and OSWorld further show that interaction design affects
behavior in stateful coding, web, and computer environments
\citep{yang2024sweagent,zhou2023webarena,xie2024osworld}. They establish
the empirical importance of interfaces, but generally treat them as tool
provision or runtime infrastructure. We instead study the harness as the joint
action--state representation that defines the policy space presented to the
model.

This view is related to abstraction in sequential decision-making. MDP
homomorphisms and state-abstraction theory characterize when compressed
representations preserve transition, reward, value, or policy structure
\citep{ravindran2003smdp,li2006stateabstraction,abel2016nearoptimal,
    abel2020valuepreserving}; the Information Bottleneck similarly retains
target-relevant information while suppressing irrelevant input dependence
\citep{tishby2000informationbottleneck}. Classical abstractions operate within
an already specified decision process. An LLM harness instead jointly defines
model actions, visible observations, execution bindings, and feedback
semantics. Our Harness-Induced Decision Space describes this transformation,
and the Semantic Decision Bottleneck constrains its compression by decision
preservation.

\subsection{CAD Generation and Interactive Agents}
CAD generation has been studied through construction sequences, executable
programs, and interactive actions. DeepCAD and Fusion 360 Gallery represent
structured modeling histories \citep{wu2021deepcad,willis2020fusion360gallery},
while Text2CAD and CADmium learn parametric command sequences from language
\citep{khan2024text2cad,govindarajan2025cadmium}.
CAD-Coder systems generate
executable CAD code \citep{guan2025cadcoder,doris2025cadcoder}, and CADFusion,
CADSmith, and CADDesigner incorporate rendering, execution repair, or
multi-agent feedback
\citep{wang2025cadfusion,barkley2026cadsmith,fan2025caddesigner}. Code is
expressive, but also makes program organization and API realization part of
the model output.

Interactive systems instead place decisions in evolving design state.
CAD-Assistant combines a Python interpreter with FreeCAD operations
\citep{mallis2024cadassistant}; ToolCAD trains tool-using CAD agents from
demonstrations and online curriculum learning
\citep{gong-etal-2026-toolcad}; CADMate learns sequential assembly actions
using geometric and physical rewards \citep{chen2026cadmate}. Our contribution
is complementary: under fixed models, tasks, and terminal geometry, we compare
the decision spaces induced by domain-semantic actions and a general-purpose
coding harness.

\subsection{Policy Learning from Interaction}

Tool policies commonly acquire initial capability from generated or curated
demonstrations \citep{schick2023toolformer,li2023apibank,qin2023toolllm}. In a
stateful environment, however, action validity depends on the preceding
trajectory and feedback; imitation therefore encounters distribution shift
when student-induced states depart from teacher trajectories
\citep{ross2011reduction}. Online methods address this gap using execution
outcomes on states visited by the learned policy
\citep{feng2025retool,qian2025toolrl}, including curriculum reinforcement
learning in CAD \citep{gong-etal-2026-toolcad}. We align causal supervision,
student rollouts, and environment feedback within the same Harness-Induced
Decision Space, enabling a controlled study of how representation affects both
imitation and online learning.

\section{Method}

The harness determines how the closed CAD loop is presented to the language
model: whether the model writes general-purpose code or selects a semantic CAD
action, which part of the current design is shown, and how execution failures
are reported. We formalize this interaction, formulate the Semantic Decision
Bottleneck, instantiate it in \textsc{DeciCAD}, and use the same
representation for SFT and online RL.

\subsection{From a CAD Backend to a Model-Facing Decision Process}

\paragraph{CAD tasks and backend state.}
We use \(g\) to denote one concrete CAD task, which may contain a natural
language instruction, optional initial state or reference image, and a final
design criterion.
At step \(t\), its internal state
\(s_t\in\mathcal S\) includes the current document, feature structure,
geometry, parameters, and dependencies;
note that this is the CAD system's internal
configuration, rather than text shown to the LLM.
An element
\(u_t\in\mathcal U\) is a backend operation, and
\(
s_{t+1}\sim P(\cdot\mid s_t,u_t)
\)
describes how the document changes after its execution. Task \(g\) specifies
an initial-state distribution \(\mu_g\), a maximum number of interaction steps
\(H\), and reward \(r_g(s_t,u_t,s_{t+1})\). This process describes what the
CAD system can execute. It does not yet describe what the language model is
allowed to say or see.

\begin{figure}[t]
    \centering
    \includegraphics[width=0.8\linewidth]{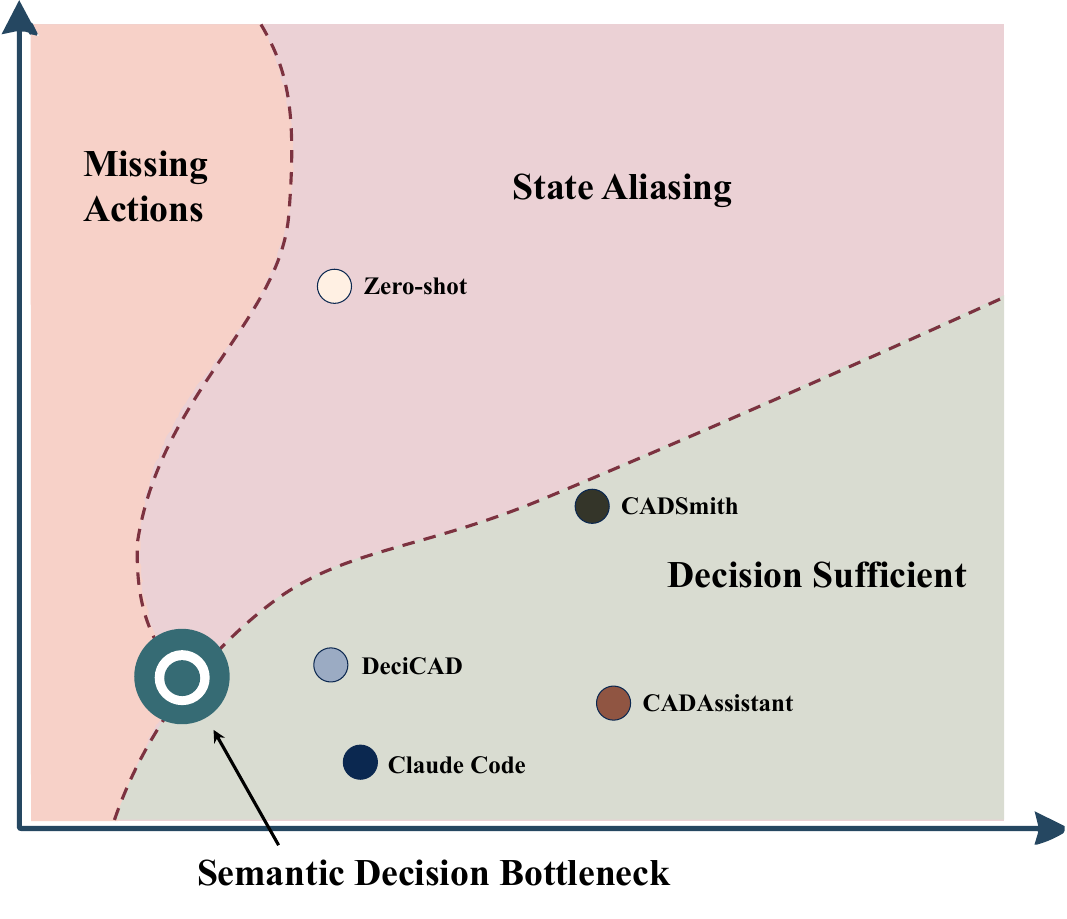}
    \caption{Semantic CAD Decision Bottleneck. Missing actions and state
        aliasing violate action and observation sufficiency, respectively;
        \textsc{DeciCAD} is designed to approach the decision-sufficient
        boundary. Method locations are conceptual and not to scale.}
    \label{fig:semantic-cad-decision-bottleneck}
\end{figure}

\paragraph{What the harness changes.}
A language model does not directly issue an internal backend operation
\(u_t\), nor does it receive the full backend state \(s_t\). A harness provides
the model-facing interface between the two. We represent it as
\begin{equation}
    \label{eq:harness}
    \mathcal H=(\mathcal A_{\mathcal H},
    \beta_{\mathcal H},O_{\mathcal H}),
\end{equation}
whose components have direct operational meanings:
\begin{itemize}
    \item \(\mathcal A_{\mathcal H}\) is the set of admissible model-level actions exposed by the harness. In a coding harness, an action may be a Python program. In
          \textsc{DeciCAD}, it is a typed operation, such as
          \texttt{Extrude(sketch=S3, distance=10mm)}.
    \item \(\beta_{\mathcal H}\) interprets the model output in the current CAD
          document and binds it to an executable backend operation.
    \item \(O_{\mathcal H}\) determines what the model receives after
          execution, including compact descriptions of success, failure, and
          affected design objects.
\end{itemize}
The harness thus translates between model outputs, backend operations, and
model-visible execution evidence without replacing the CAD kernel.

\paragraph{One closed-loop interaction step.}
One interaction step is
\begin{equation}
    \label{eq:harness-transition}
    \begin{aligned}
        a_t     & \sim\pi_\theta(\cdot\mid h_t;\mathcal H), \\
        u_t     & =\beta_{\mathcal H}(a_t,s_t),             \\
        s_{t+1} & \sim P(\cdot\mid s_t,u_t),                \\
        y_{t+1} & \sim O_{\mathcal H}
        (\cdot\mid s_t,a_t,s_{t+1}),
    \end{aligned}
\end{equation}
Here, \(a_t\) is what the model emits, \(u_t\) is what the CAD backend
executes, and \(y_{t+1}\) is what the model sees afterward. The model chooses
its next action from the visible history
\begin{equation*}
    h_t=(g,y_0,a_0,y_1,\ldots,a_{t-1},y_t).
\end{equation*}
This contains the task specification and previous model actions and harness
responses, but not the hidden backend state \(s_t\). Read-only inspection is
also represented as an action, keeping information gathering and
geometry-changing operations in the same closed loop.

Combining backend execution and harness response gives the induced transition
kernel
\begin{equation}
    \label{eq:hids}
    T_{\mathcal H}(s',y\mid s,a)
    =P\!\left(s'\mid s,\beta_{\mathcal H}(a,s)\right)
    O_{\mathcal H}(y\mid s,a,s').
\end{equation}
We call the model-facing interaction defined by
Eqs.~\eqref{eq:harness-transition}--\eqref{eq:hids} the
\emph{Harness-Induced CAD Decision Space}, comprising the actions, responses,
and interaction histories exposed by the harness.

The distinction is important. A coding harness and \textsc{DeciCAD} may call
the same CAD kernel yet induce different sequential prediction problems. The
former requires choices about program structure, API calls, and error
handling; the latter requires choices about CAD operations and their geometric
parameters, with different evidence after each step.

\subsection{Semantic Decision Bottleneck}

The preceding formulation exposes two sources of unnecessary difficulty. On
the output side, many syntactically different programs can realize the same
CAD change. On the input side, the CAD backend contains much more information
than each decision requires. A useful harness should compress both directions
without removing a necessary operation or hiding a distinction that changes
the correct next action. Figure~\ref{fig:semantic-cad-decision-bottleneck}
illustrates this trade-off.

\paragraph{Preserving necessary actions.}
Let \(V^*_{g,t}(s)\) be the best return achievable from backend state \(s\) at
step \(t\), and let \(Q^*_{g,t}(s,u)\) be the return obtained by first
executing backend operation \(u\) and then acting optimally. A harness is
\emph{action-sufficient} if
\begin{equation}
    \label{eq:action-sufficiency}
    \max_{a\in\mathcal A_{\mathcal H}}
    Q^*_{g,t}\!\left(s,\beta_{\mathcal H}(a,s)\right)
    =V^*_{g,t}(s)
\end{equation}
for every task-reachable state. The harness need not expose every low-level
CAD command under Eq.~\eqref{eq:action-sufficiency}, only an action realizing
an optimal next operation when needed.

\paragraph{Preserving necessary state distinctions.}
A harness response need not reveal the backend state, so one visible history
may be consistent with several backend states, as in a partially observable
decision process \citep{kaelbling1998pomdp}.
Define
\begin{equation}
    \label{eq:consistent-states}
    \mathcal B_{\mathcal H}(h_t)
    :=\bigl\{s\in\mathcal S:
    \Pr(s_t=s\mid h_t;\mathcal H)>0\bigr\}
\end{equation}
as the backend states consistent with \(h_t\). We call \(h_t\)
\emph{reachable} when this set is nonempty. For a backend state \(s\), let
\begin{equation}
    \label{eq:optimal-actions}
    \mathcal A^*_{\mathcal H,g,t}(s)
    :=\operatorname*{arg\,max}_{a\in\mathcal A_{\mathcal H}}
    Q^*_{g,t}\!\left(s,\beta_{\mathcal H}(a,s)\right).
\end{equation}
These are the model actions optimal at \(s\). The harness is
\emph{observation-sufficient} if the states in
Eq.~\eqref{eq:consistent-states} share an action from the sets in
Eq.~\eqref{eq:optimal-actions}:
\begin{align}
    \label{eq:observation-sufficiency}
    \bigcap_{s\in\mathcal B_{\mathcal H}(h_t)}
    \mathcal A^*_{\mathcal H,g,t}(s)\neq\varnothing,
    \quad \forall\text{ reachable }h_t,\ t<H.
\end{align}
The harness may map two backend states to the same visible history when the
same next action is optimal in both; it must distinguish them when they require
incompatible next actions. A generic ``operation failed'' message is therefore
insufficient if different failures require different repairs.

A harness that is both action-sufficient and observation-sufficient is
\emph{decision-sufficient}. Any selection from the intersection in
Eq.~\eqref{eq:observation-sufficiency} depends only on \(h_t\) and, by
action sufficiency, is optimal at the true state; backward induction gives
\(\mathbb E_{s_0\sim\mu_g}[V^*_{g,0}(s_0)]\) for the resulting
history-dependent policy. An inexpressible operation therefore violates
action sufficiency, whereas a response merging states that require
incompatible repairs violates observation sufficiency.

\paragraph{Compressing implementation and context.}
Action and observation sufficiency specify what a harness must preserve, but
do not uniquely determine its model-facing representation. Among
decision-sufficient interfaces, we prefer those that reduce unnecessary
burden on both model output and input. We refer to this principle as the
\emph{Semantic Decision Bottleneck}: compressing action representations and
visible context while retaining the distinctions required for correct action
selection and error recovery. \textsc{DeciCAD} instantiates this principle
through domain-specific action schemas, concise execution feedback, and
targeted inspection operations that retrieve additional state only when
needed. A formal preference relation over decision-sufficient harnesses is
provided in the appendix.

\subsection{\textsc{DeciCAD}: A Domain-Native CAD Harness}

\textsc{DeciCAD} implements the preceding design principle through semantic
actions, state-grounded execution, and decision-oriented responses. These
three components correspond directly to
\(\mathcal A_{\mathcal H}\), \(\beta_{\mathcal H}\), and
\(O_{\mathcal H}\) in Eq.~\eqref{eq:harness}.

\subsubsection{Semantic CAD Actions}

\textsc{DeciCAD} represents each model action as
\begin{equation*}
    a_t=(\omega_t,\mathbf{x}_t),
\end{equation*}
where \(\omega_t\) identifies a typed CAD operation and
\(\mathbf{x}_t\) contains its structured arguments. These actions express
design intent such as sketch and
constraint editing, parametric features, Boolean operations, transformations,
inspection, and artifact persistence
\citep{willis2020fusion360gallery,wu2021deepcad,mallis2024cadassistant}.
Using these transitions as the policy-action unit removes
distinctions caused solely by program organization, variable naming, or API
expansion, while typed parameters and state preconditions retain the
information needed for execution.

\subsubsection{State-Grounded Execution}

The binding map \(\beta_{\mathcal H}\) resolves each domain action against the
CAD document that exists at that moment and invokes the corresponding backend
transition. All accepted actions modify the same persistent document, and
stable model-visible identifiers let later actions refer to previously created
entities. If parsing, reference resolution, precondition checking, or backend
execution fails, the harness keeps the document recoverable and reports the
failure. Action semantics are thus grounded in actual CAD transitions rather
than textual well-formedness alone.

\subsubsection{Decision-Oriented Responses}

After executing \(a_t\), \textsc{DeciCAD} returns
\(
y_{t+1}=(e_t,o_{t+1}),
\)
where \(e_t\) summarizes the execution consequence and \(o_{t+1}\) contains
additional state information needed for subsequent decisions. The response
indicates whether the action executed, which objects were modified, which
postconditions hold, and where a failure occurred.
When more information is needed, inspection actions retrieve object hierarchy,
constraints, parameters, dependencies, local measurements, or terminal
artifacts.

A complete state dump repeatedly injects irrelevant context; overly generic
feedback can instead map states requiring different recovery actions to the
same observation. \textsc{DeciCAD} uses compact consequence-related evidence
by default and exposes targeted inspection actions so that the policy acquires
additional evidence only when the next decision requires it. Detailed action schemas, reference
mechanisms, backend adaptation, and runtime isolation are provided in the
appendix.

The resulting \textsc{DeciCAD} interaction loop is
\begin{equation*}
    h_t\xrightarrow{\pi_\theta}a_t
    \xrightarrow{\beta_{\mathcal H},P}s_{t+1}
    \xrightarrow{O_{\mathcal H}}y_{t+1},
\end{equation*}
which concretely instantiates Eq.~\eqref{eq:harness-transition}.

\begin{table*}[t]
    \centering
    \footnotesize
    \setlength{\tabcolsep}{3pt}
    \begin{tabular}{@{}l*{16}{c}@{}}
        \toprule
        & \multicolumn{4}{c}{CADPrompt} & \multicolumn{4}{c}{P3D-Text-to-3D} & \multicolumn{4}{c}{CADInstruct} & \multicolumn{4}{c}{P3D-Assembly-3D} \\
        \cmidrule(lr){2-5} \cmidrule(lr){6-9} \cmidrule(lr){10-13} \cmidrule(lr){14-17}
        Method & CD$\downarrow$ & F1$\uparrow$ & IoU$\uparrow$ & Avail.$\uparrow$ & CD$\downarrow$ & F1$\uparrow$ & IoU$\uparrow$ & Avail.$\uparrow$ & CD$\downarrow$ & F1$\uparrow$ & IoU$\uparrow$ & Avail.$\uparrow$ & CD$\downarrow$ & F1$\uparrow$ & IoU$\uparrow$ & Avail.$\uparrow$ \\
        \midrule
        Zero-shot & 0.32 & 61.84 & \underline{74.10} & 93.50 & 0.35 & 37.11 & 48.15 & 90.00 & 0.93 & 19.28 & \underline{23.24} & \textbf{100.00} & 0.52 & 25.40 & 22.35 & 71.92 \\
        CADSmith & \textbf{0.26} & 63.70 & 72.08 & \textbf{100.00} & \underline{0.33} & \underline{40.74} & 52.73 & \textbf{100.00} & \underline{0.85} & \underline{20.11} & 23.14 & \textbf{100.00} & 0.64 & 26.27 & 21.19 & 96.06 \\
        CAD-Assistant & 0.41 & 56.15 & 65.77 & \textbf{100.00} & 0.45 & 35.83 & \underline{53.04} & \textbf{100.00} & 0.98 & 18.07 & 14.17 & \textbf{100.00} & 0.64 & 25.43 & 21.12 & 99.51 \\
        Claude Code & 0.29 & \underline{65.44} & 73.86 & \textbf{100.00} & 0.34 & 40.44 & 51.20 & \textbf{100.00} & 0.92 & 20.10 & 22.62 & \textbf{100.00} & \textbf{0.32} & \underline{31.95} & \underline{23.22} & \textbf{100.00} \\
        \textsc{DeciCAD} & \underline{0.28} & \textbf{65.83} & \textbf{75.12} & \textbf{100.00} & \textbf{0.25} & \textbf{43.63} & \textbf{54.22} & \textbf{100.00} & \textbf{0.84} & \textbf{23.49} & \textbf{23.35} & \textbf{100.00} & \underline{0.49} & \textbf{32.11} & \textbf{27.07} & \textbf{100.00} \\
        \midrule
        \multicolumn{17}{@{}l}{\textit{Tokens per task (thousands, input / output)}} \\
        Zero-shot & \multicolumn{4}{c}{0.36 / 0.22} & \multicolumn{4}{c}{0.90 / 1.08} & \multicolumn{4}{c}{0.82 / 0.80} & \multicolumn{4}{c}{0.76 / 1.66} \\
        CADSmith & \multicolumn{4}{c}{8.15 / 1.40} & \multicolumn{4}{c}{30.15 / 6.91} & \multicolumn{4}{c}{22.73 / 5.81} & \multicolumn{4}{c}{59.23 / 19.06} \\
        CAD-Assistant & \multicolumn{4}{c}{26.39 / 2.54} & \multicolumn{4}{c}{36.20 / 5.47} & \multicolumn{4}{c}{42.40 / 6.41} & \multicolumn{4}{c}{51.95 / 8.37} \\
        Claude Code & \multicolumn{4}{c}{509.66 / 9.37} & \multicolumn{4}{c}{704.34 / 85.57} & \multicolumn{4}{c}{702.14 / 110.64} & \multicolumn{4}{c}{1399.69 / 23.02} \\
        \textsc{DeciCAD} & \multicolumn{4}{c}{293.61 / 4.27} & \multicolumn{4}{c}{390.54 / 10.54} & \multicolumn{4}{c}{351.13 / 61.38} & \multicolumn{4}{c}{655.06 / 10.31} \\
        \bottomrule
    \end{tabular}
    \caption{Overall performance across four benchmarks, ordered by increasing
        calibrated difficulty. CD, F1, and IoU are scaled by 100; Avail.\ is the
        percentage of tasks producing evaluable geometry. Best results are in
        \textbf{bold} and second-best results are \underline{underlined}. Token counts in the lower
        block are averaged per task, are descriptive, and are not ranked.}
    \label{tab:overall_performance}
\end{table*}

\subsection{Harness-Grounded Policy Learning}

Execution and learning use the same CAD decision space. Teacher supervision,
student rollouts, and environment optimization retain the same action
semantics, persistent references, and feedback protocol. For each accepted
teacher trajectory under harness \(\mathcal H\), we reconstruct the causal
history \(h_{n,t}\) visible immediately before action \(a_{n,t}\). The
resulting pairs form
\(\mathcal D_{\mathrm{SFT}}^{\mathcal H}
=\{(h_{n,t},a_{n,t})\}_{n,t}\), and we optimize
\begin{equation*}
    \mathcal L_{\mathrm{SFT}}(\theta)
    =-\mathbb E_{(h,a)\sim\mathcal D_{\mathrm{SFT}}^{\mathcal H}}
        [\log\pi_\theta(a\mid h;\mathcal H)].
\end{equation*}
Later feedback, actions, and terminal evaluation never enter the input,
preserving causal conditioning in sequential imitation
\citep{ross2011reduction,qin2023toolllm}. Recovered failures are retained
because their error state, feedback, and corrective action provide genuine
CAD recovery supervision.

Uniform decision sampling would overweight long trajectories, repeated
inspection, and frequent operations. We therefore stratify causal decisions
by task difficulty, design stage, preceding execution outcome, and target
action type, preserving modeling, inspection, and recovery decisions under
their original histories.

SFT covers teacher-visited CAD states, whereas closed-loop execution reaches
states induced by student actions.
We therefore sample current-policy rollouts by the same harness
and optimize groups containing evaluable terminal geometry.
Geometry quality supplies the primary group-relative signal;
execution and protocol evidence describe the trajectory but cannot make an
all-failure group trainable. Reference-regularized Group Relative Policy
Optimization (GRPO) with curriculum sampling instantiates this update
\citep{bengio2009curriculum,shao2024deepseekmath,qian2025toolrl,
    gong-etal-2026-toolcad}; reward coefficients, sampling schedules, and optimizer
settings are deferred to the appendix.

\section{Experiments}

Our evaluation follows the methodological chain developed above. We ask
whether a semantic CAD decision space preserves the capabilities required for
complex modeling, whether its benefits are associated with lower
task-irrelevant representation burden, and whether different harness-induced
policy spaces exhibit different finite-data learnability and online
improvement.

\subsection{Experimental Setup}

\paragraph{Benchmark.}
We evaluate harness-level capability, in increasing calibrated difficulty, on
CADPrompt \citep{alrashedy2025cadcode}, a hard subset of
CADInstruct \citep{lv2025cadinstruct}, and two P3D-derived benchmarks:
P3D-Text-to-3D from Text2CAD and the multimodal P3D-Assembly-3D from the
Fusion 360 Gallery
\citep{khan2024text2cad,willis2020fusion360gallery}.

\paragraph{Difficulty.}
We assign each task a calibrated difficulty that combines semantic complexity,
reference-mesh complexity, and description ambiguity, and divide each
benchmark into seven equal-frequency quantiles \(Q1\)--\(Q7\). Quantiles
therefore express relative difficulty within a benchmark rather than the same
absolute complexity across benchmarks. The full definition and the
distribution of each benchmark are given in Appendix~D.1 and
Figure~\ref{fig:supp-difficulty-distribution}.

\paragraph{Baseline.}
Using identical task inputs and Claude Opus 4.7
\citep{anthropic2026opus47}, \textsc{DeciCAD} is
compared with zero-shot CadQuery generation, CADSmith, CAD-Assistant, and
Claude Code \citep{anthropic2026claudecode}.
We study policy learning on the standard train--test split of Text2CAD/DeepCAD
\citep{khan2024text2cad,wu2021deepcad}, fixing
Qwen3.5-9B \citep{qwen2026qwen35} as the student and Claude Sonnet 4.6
\citep{anthropic2026sonnet46} as the teacher in both the Coding and
\textsc{DeciCAD} decision spaces.
Generated and reference meshes
are normalized, center-aligned, and rigidly registered.

\paragraph{Metric.}
We report Chamfer Distance (CD), surface F1, volumetric Intersection over Union (IoU), and
Geometry Availability. F1, IoU, and CD are computed over valid assets;
Geometry Availability measures the percentage of tasks entering geometric
evaluation. Harness-level and ablation tables scale CD, F1, and IoU by 100,
whereas policy-learning tables report their original values. These
point-set and occupancy-based measures follow established 3D reconstruction
evaluation practice \citep{fan2017pointset,mescheder2019occupancy}. To measure
representation cost, we report model-visible input and generated output tokens
averaged per task; provider-specific cache statistics are excluded. Across
tables, best results are bold and second best are underlined.
Policy learning for both \textsc{DeciCAD} and the coding harness used eight
NVIDIA H800 GPUs; complete configurations are provided in the appendix.

\begin{figure*}[t]
    \centering
    \includegraphics[width=\textwidth]{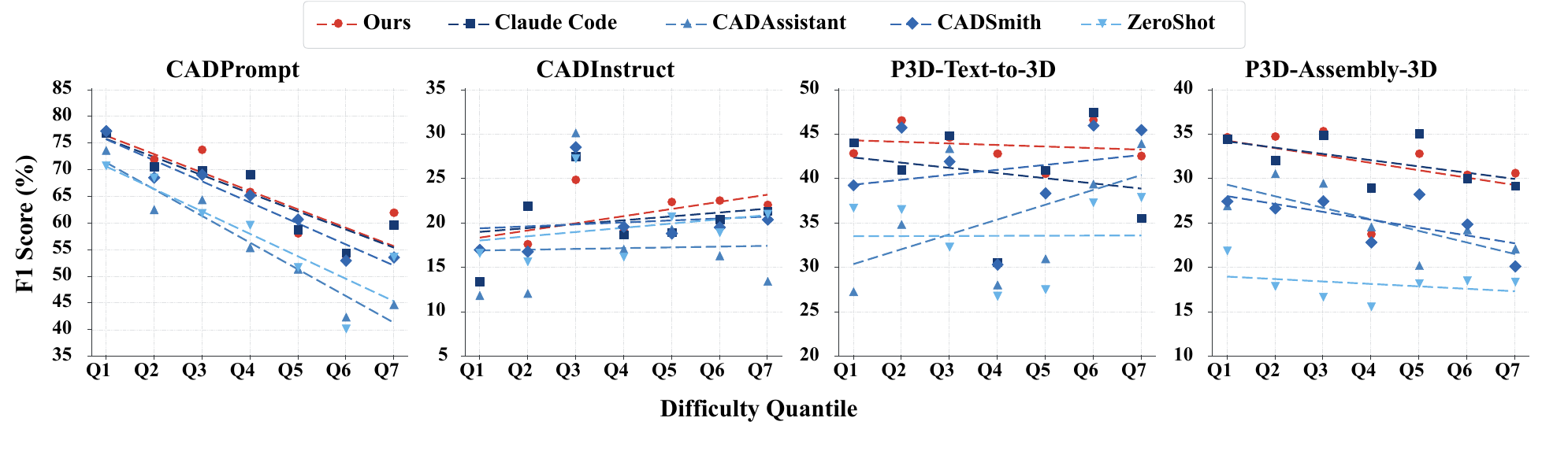}
    \caption{Linear summaries of end-to-end F1 across difficulty quantiles. Dashed lines
        are least-squares fits over the seven quantiles.}
    \label{fig:supp-f1-slopes}
\end{figure*}

\subsection{Decision Sufficiency and CAD Capability}

Table~\ref{tab:overall_performance} compares the same frontier model under
different CAD representations and interaction mechanisms. \textsc{DeciCAD}
matches or exceeds the strongest coding harness in geometric fidelity while
producing evaluable assets across all four benchmarks.

To check that this advantage is not driven only by easy tasks, we divide each
benchmark into seven calibrated difficulty quantiles and report end-to-end F1,
which assigns zero to missing or invalid outputs
(Figure~\ref{fig:supp-f1-slopes}). \textsc{DeciCAD} and Claude Code have
similar slopes on CADPrompt and both decline slowly on P3D-Assembly-3D;
\textsc{DeciCAD} is flatter on P3D-Text-to-3D, indicating a more stable
advantage under complex textual constraints. Across the individual cells,
\textsc{DeciCAD} leads 14 of the 28 dataset--quantile combinations and exceeds
Claude Code at the hardest quantile of every benchmark, where missing
operations and unresolved state ambiguity would be most visible. Semantic CAD
actions therefore preserve the transitions needed for complex modeling while
supporting more reliable final geometry. The full quantile heatmaps and
difficulty distributions are provided in the appendix.

\begin{figure*}[t]
    \centering
    \begin{subfigure}[t]{0.24\textwidth}
        \includegraphics[width=\linewidth]{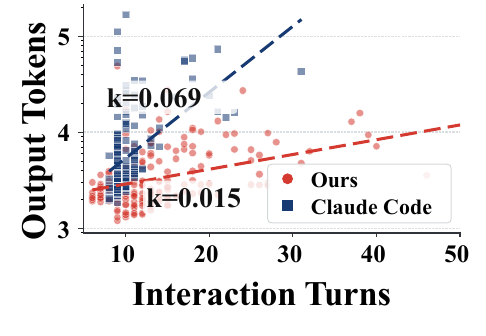}
        \caption{CADPrompt}
    \end{subfigure}\hfill
    \begin{subfigure}[t]{0.24\textwidth}
        \includegraphics[width=\linewidth]{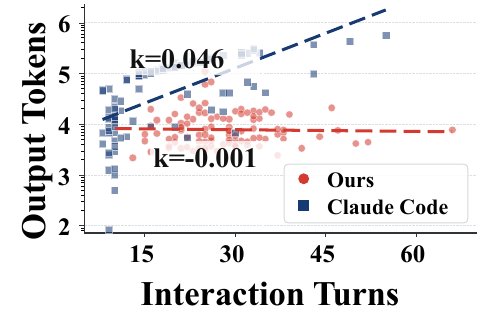}
        \caption{P3D-Text-to-3D}
    \end{subfigure}\hfill
    \begin{subfigure}[t]{0.24\textwidth}
        \includegraphics[width=\linewidth]{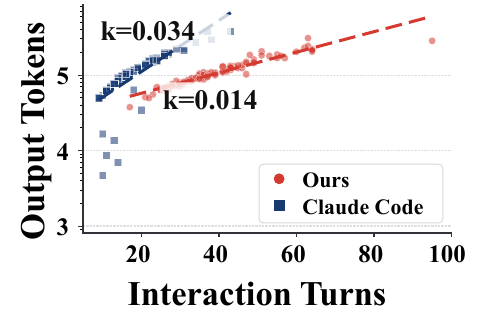}
        \caption{CADInstruct}
    \end{subfigure}\hfill
    \begin{subfigure}[t]{0.24\textwidth}
        \includegraphics[width=\linewidth]{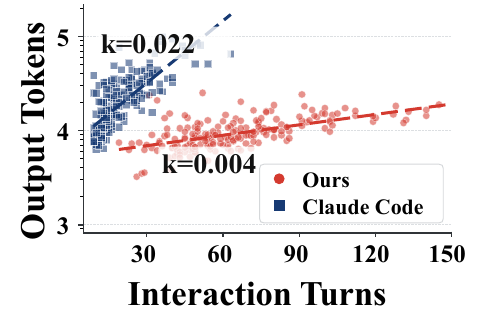}
        \caption{P3D-Assembly-3D}
    \end{subfigure}
    \caption{Interaction length versus cumulative output tokens. Each point is
        a complete task trajectory; dashed lines are log-linear fits.}
    \label{fig:supp-turns-output}
\end{figure*}

\subsection{Validating the Semantic Decision Bottleneck}

We test both sides of the Semantic Decision Bottleneck: output scaling
characterizes action representation, while observation ablations test whether
compression preserves decision-relevant state.

\paragraph{Action-side representation.}
Token totals in Table~\ref{tab:overall_performance} depend on trajectory
length, feedback history, and caching, and therefore do not by themselves
identify the source of the output reduction.
Figure~\ref{fig:supp-turns-output} compares interaction turns with cumulative
output at the task level. Claude Code exhibits consistently positive
log-linear slopes of approximately \(0.022\)--\(0.069\), whereas
\textsc{DeciCAD}'s slopes range from approximately \(-0.001\) to \(0.015\):
its output remains comparatively stable as trajectories lengthen. The
reduction does not arise from early termination. \textsc{DeciCAD}
trajectories may still be long, but each prediction contains an action and
its arguments rather than an expanded implementation, which identifies action
representation, rather than shortened reasoning, as the source of the
efficiency gain.

\begin{table}[h]
    \centering
    \footnotesize
    \setlength{\tabcolsep}{4pt}
    \begin{tabular}{@{}lcccc@{}}
        \toprule
        Metric & SDO & SDO-CFF & RSO & EFO \\
        \midrule
        Avail.\ (\%)$\uparrow$ & \textbf{100.00} & \underline{95.00} & 55.00 & 87.00 \\
        CD$\downarrow$ & \textbf{0.25} & \underline{0.44} & 0.99 & 0.61 \\
        F1$\uparrow$ & \textbf{43.63} & \underline{32.26} & 21.62 & 31.75 \\
        IoU$\uparrow$ & \textbf{54.22} & \underline{43.78} & 33.06 & 40.71 \\
        Input tokens$\downarrow$ & \textbf{390.54} & \underline{3,050.00} & 3,450.00 & 3,200.00 \\
        \bottomrule
    \end{tabular}
    \caption{Ablation of observation and feedback design on P3D-Text-to-3D.
        Variants are Structured Decision-Oriented Observation (SDO; full), SDO
        with Coarse Failure Feedback (SDO-CFF), Raw State Observation (RSO), and
        Execution-Feedback-Only Observation (EFO). CD, F1, and IoU are scaled by
        100; input tokens are in thousands per task.}
    \label{tab:observation_ablation}
\end{table}

\paragraph{Observation-side representation.}
Table~\ref{tab:observation_ablation} isolates observation sufficiency on
P3D-Text-to-3D. More raw state degrades geometry despite increasing context,
whereas removing active inspection or coarsening failure semantics hides
distinctions needed by subsequent actions. Effective compression therefore
retains decision-identifying evidence rather than simply minimizing or
maximizing observation volume. Together, both sides support a
decision-sufficient space with lower representation burden.

\begin{figure*}[t]
    \centering
    \includegraphics[width=0.92\textwidth]{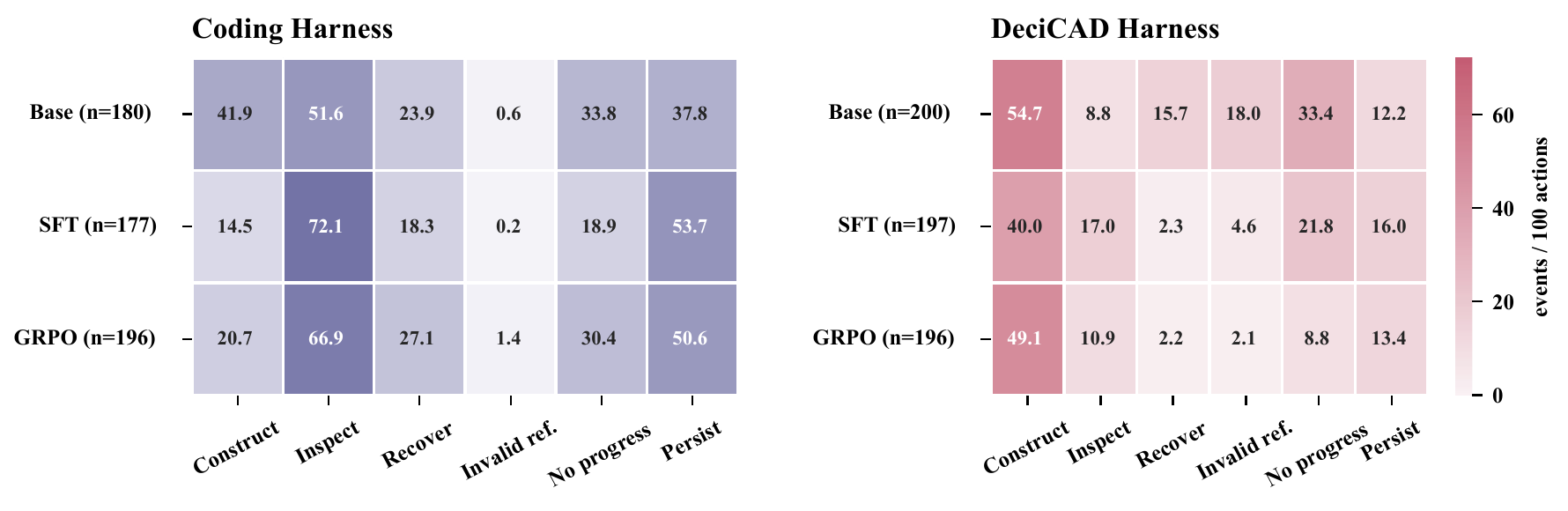}
    \caption{Observable policy behavior across Base, SFT, and GRPO. The left
        and right panels show the Coding and \textsc{DeciCAD} harnesses. Columns
        report events per 100 decisions under a shared behavioral mapping; \(n\)
        is the number of trajectories containing parseable action records.}
    \label{fig:supp-policy-behavior}
\end{figure*}

\subsection{Harness-Grounded CAD Policy Learning}

Frontier models can operate both harnesses, but the corresponding policies
need not be equally learnable by a smaller student. We reconstruct causal
decision supervision from harness-matched teacher trajectories and compare
Qwen3.5-9B policies on the complete Text2CAD test set.

\paragraph{Policy acquisition from interaction trajectories.}
Table~\ref{tab:sft_learnability} compares Base and SFT policies within each
induced decision space.

\begin{table}[h]
    \centering
    \footnotesize
    \setlength{\tabcolsep}{4pt}
    \begin{tabular}{@{}llcccc@{}}
        \toprule
        Harness & Policy & Avail.\ (\%) & F1 & IoU & CD \\
        \midrule
        \multirow{2}{*}{Coding} & Base & 12.5 & 0.5902 & 0.5871 & \underline{0.00972} \\
        & SFT & 60.5 & 0.6460 & 0.6286 & 0.01130 \\
        \midrule
        \multirow{2}{*}{\textsc{DeciCAD}} & Base & \underline{61.5} & \underline{0.6883} & \underline{0.6487} & 0.01023 \\
        & SFT & \textbf{66.5} & \textbf{0.7437} & \textbf{0.6631} & \textbf{0.00782} \\
        \bottomrule
    \end{tabular}
    \caption{Closed-loop policy acquisition on the Text2CAD/\allowbreak DeepCAD test set.
        Availability is computed over all tasks; F1, IoU, and CD are computed
        over valid assets.}
    \label{tab:sft_learnability}
\end{table}

SFT substantially improves the coding policy, showing that the student can
acquire program construction, execution, and repair behavior. Under matched
supervision, however, the \textsc{DeciCAD} policy produces more evaluable
assets and better geometry across all reported metrics. Coding supervision
must resolve geometric intent together with program structure, API choice, and
implementation form, whereas \textsc{DeciCAD} binds supervision directly to
state-conditioned domain actions. The result supports harness-dependent
learnability under finite supervision.

\paragraph{Online improvement from environment feedback.}
Starting
from each SFT policy, we sample student rollouts in its own induced decision
space, and apply reference-regularized GRPO with geometry-grounded rewards.
The student model, task exposure,
samples per task, and evaluator are matched. Improvement is judged by the
frozen unseen-task evaluation rather than training reward, trajectory length,
or token consumption.

\begin{table}[h]
    \centering
    \footnotesize
    \setlength{\tabcolsep}{4pt}
    \begin{tabular}{@{}llcccc@{}}
        \toprule
        Harness & Policy & Avail.\ (\%) & F1 & IoU & CD \\
        \midrule
        \multirow{2}{*}{Coding} & SFT & 60.5 & 0.6460 & 0.6286 & 0.01130 \\
        & SFT+GRPO & 65.5 & 0.6253 & 0.5987 & 0.02253 \\
        \midrule
        \multirow{2}{*}{\textsc{DeciCAD}} & SFT & \underline{66.5} & \underline{0.7437} & \underline{0.6631} & \underline{0.00782} \\
        & SFT+GRPO & \textbf{73.5} & \textbf{0.7880} & \textbf{0.7036} & \textbf{0.00570} \\
        \bottomrule
    \end{tabular}
    \caption{Online policy improvement under matched training and evaluation.
        F1, IoU, and CD are computed over valid assets.}
    \label{tab:grpo_learnability}
\end{table}

Table~\ref{tab:grpo_learnability} shows that online feedback improves
\textsc{DeciCAD} across availability and all three geometric metrics, whereas
the coding policy does not realize the same gains under the matched inference
budget, supporting harness-dependent online learnability.

\paragraph{What the policies learn.}
To compare the learning paths, we map the external effects of both harnesses
into six shared categories: Construct, Inspect, Recover, Invalid Reference, No
Progress, and Persist, and report events per 100 decisions
(Figure~\ref{fig:supp-policy-behavior}). Coding SFT increases Inspect from
51.6 to 72.1 events per 100 decisions, while Construct and Recover both
decline. GRPO restores part of the construction and recovery behavior, but No
Progress rises from 18.9 to 30.4, showing that more repair attempts do not
consistently produce geometric progress. In \textsc{DeciCAD}, SFT reduces
Invalid Reference and No Progress from 18.0 and 33.4 to 4.6 and 21.8, and GRPO
further reduces them to 2.1 and 8.8 while retaining dense construction
behavior. Coding supervision therefore mainly changes execution, inspection,
and repair behavior, whereas \textsc{DeciCAD} learning more directly changes
CAD state transitions and suppresses interactions that fail to advance the
model.

\section{Conclusion}

We formalized Harness-Induced CAD Decision Spaces and instantiated the
Semantic Decision Bottleneck in \textsc{DeciCAD} through semantic actions,
persistent state, and decision-oriented feedback. Across execution, SFT, and
online RL, \textsc{DeciCAD} preserves complex modeling capability while
reducing irrelevant burden and enabling more learnable and optimizable
policies. Together, these findings show that a harness determines both the CAD
operations a model can execute and the distinctions its policy must learn,
while environment feedback improves that policy within the same decision
representation. This establishes harness design as central to CAD agents.
Our study has limitations. The harness comparison uses one CAD backend and
one frontier model, policy learning uses one student model on one dataset,
and mesh-based metrics do not capture editability, constraint satisfaction,
or feature-tree validity. Extending the comparison to further backends,
models, and CAD-native correctness measures is left to future work.

\clearpage
\bibliography{aaai2027}

\clearpage

\appendix
\lstset{style=suppcode}
\section{Overview of the Appendix}

This appendix complements the main text with the theoretical,
implementation, experimental, and system details needed to connect the
Harness-Induced CAD Decision Space to its realization in \textsc{DeciCAD}. The
remainder of this document is organized as follows:

\begin{enumerate}
    \item Appendix~B completes the theoretical analysis of decision
          sufficiency and the Semantic Decision Bottleneck, including the
          conditions under which action and observation compression preserve
          optimal CAD decisions.
    \item Appendix~C explains how \textsc{DeciCAD} instantiates these
          conditions through semantic CAD actions, state grounding, compact
          execution feedback, and active hierarchical inspection.
    \item Appendix~D provides the complete evaluation protocol and additional
          evidence for harness comparison, causal SFT, and geometry-grounded
          online RL.
    \item Appendix~E presents \textsc{DeciCAD} as an interactive Agentic CAD
          system, covering its user workflow, multi-agent control plane,
          backend SDK, the official FreeCAD integration, and the supported backends.
\end{enumerate}

Together, these appendices connect the formal decision-space argument to its
software realization and to the empirical evidence reported in the main text.

\section{Theoretical Foundations}

The main text models a harness as the decision interface between a language
model and a CAD backend. It characterizes decision sufficiency through the
actions that the interface can express and the state distinctions retained by
its observations, connecting partial observability with value-preserving state
abstraction \citep{kaelbling1998pomdp,ravindran2003smdp,
    li2006stateabstraction,abel2016nearoptimal}. This section completes the corresponding theoretical
argument and then formalizes the compression preference underlying the
Semantic Decision Bottleneck. Figure~\ref{fig:supp-bottleneck-detail}
summarizes the resulting criterion: interface compression is desirable only
within the decision-sufficient region.

\begin{figure*}[t]
    \centering
    \includegraphics[width=0.8\textwidth]{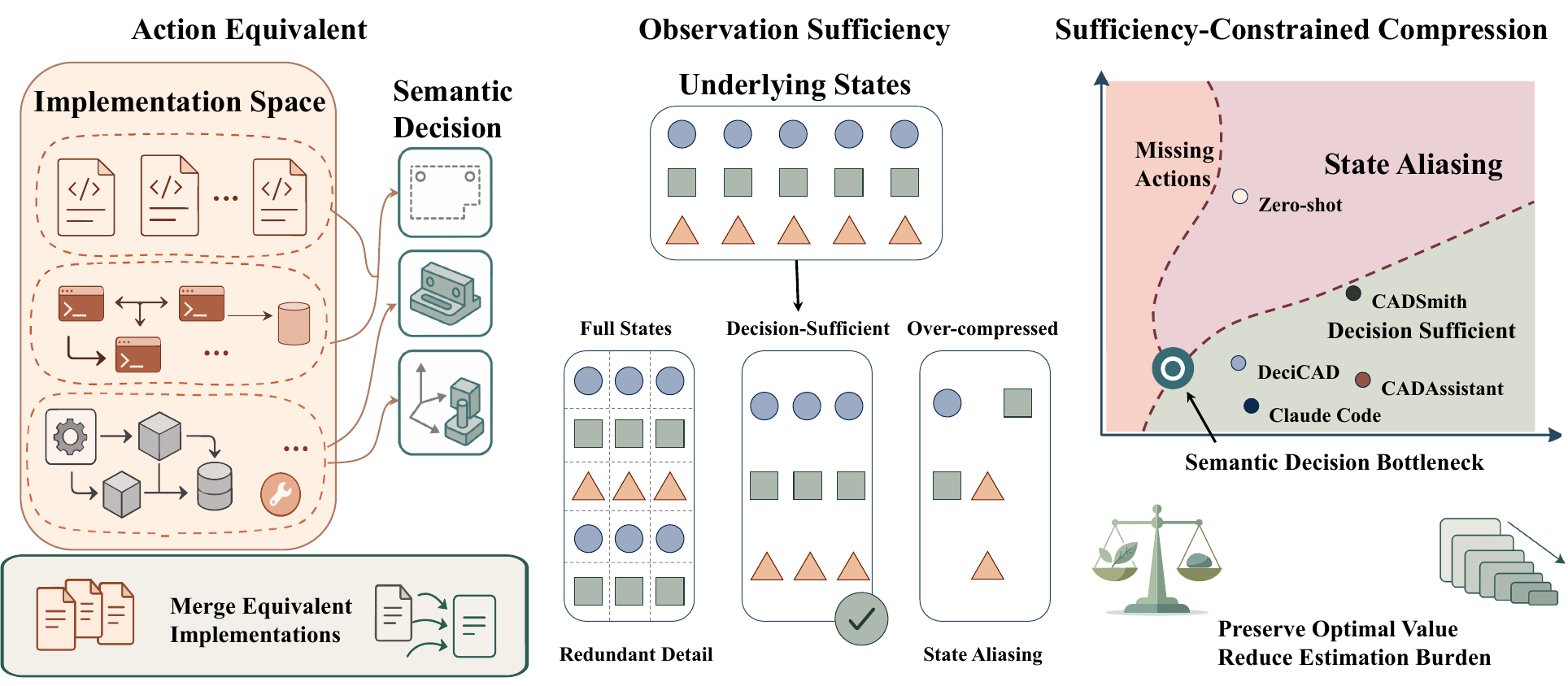}
    \caption{The Semantic Decision Bottleneck compares harnesses by their decision
        capabilities while preserving optimal CAD performance.}
    \label{fig:supp-bottleneck-detail}
\end{figure*}

\subsection{Decision Sufficiency}
\label{sec:supp-decision-sufficiency}

We first restate the objects needed for the analysis. A finite-horizon CAD task
\(g\) specifies an initial-state distribution \(\mu_g\), a horizon \(H\), and
a reward \(r_g(s_t,u_t,s_{t+1})\). The CAD backend has state
\(s_t\in\mathcal S\), admits backend operations \(u_t\in\mathcal U\), and
evolves according to the standard controlled-transition formalism
\citep{sutton2018reinforcement}:
\begin{equation}
    \label{eq:supp-backend-transition}
    s_{t+1}\sim P(\cdot\mid s_t,u_t).
\end{equation}
A state is \emph{task-reachable} if it can be reached from
\(s_0\sim\mu_g\) within \(H\) steps by operations admissible for the task.

The model does not act on this backend process directly. A harness
\begin{equation}
    \label{eq:supp-harness}
    \mathcal H
    =(\mathcal A_{\mathcal H},\beta_{\mathcal H},O_{\mathcal H})
\end{equation}
exposes model actions \(a_t\in\mathcal A_{\mathcal H}\), binds them to backend
operations through \(\beta_{\mathcal H}\), and returns model-visible responses
through \(O_{\mathcal H}\). One closed-loop step is
\begin{equation}
    \label{eq:supp-harness-transition}
    \begin{aligned}
        a_t     & \sim\pi_\theta(\cdot\mid h_t;\mathcal H), \\
        u_t     & =\beta_{\mathcal H}(a_t,s_t),             \\
        s_{t+1} & \sim P(\cdot\mid s_t,u_t),                \\
        y_{t+1} & \sim O_{\mathcal H}
        (\cdot\mid s_t,a_t,s_{t+1}),
    \end{aligned}
\end{equation}
where the policy observes only
\begin{equation}
    \label{eq:supp-visible-history}
    h_t=(g,y_0,a_0,y_1,\ldots,a_{t-1},y_t).
\end{equation}

Let \(V^*_{g,t}(s)\) be the optimal return available from backend state \(s\)
at step \(t\), and \(Q^*_{g,t}(s,u)\) the return from first applying backend
operation \(u\) and then acting optimally. The harness is
\emph{action-sufficient} when
\begin{equation}
    \label{eq:supp-action-sufficiency}
    \max_{a\in\mathcal A_{\mathcal H}}
    Q^*_{g,t}\!\left(s,\beta_{\mathcal H}(a,s)\right)
    =V^*_{g,t}(s)
\end{equation}
at every task-reachable state. This condition requires the model-facing action
space to retain an executable realization of an optimal backend operation, not
every operation or every surface form.

Because \(s_t\) is hidden from the policy, define the states consistent with a
visible history as
\begin{equation}
    \label{eq:supp-consistent-states}
    \mathcal B_{\mathcal H}(h_t)
    :=\left\{s\in\mathcal S:
    \Pr(s_t=s\mid h_t;\mathcal H)>0\right\}.
\end{equation}
A history is \emph{reachable} when this set is nonempty. The model actions that
are optimal at a particular backend state are
\begin{equation}
    \label{eq:supp-optimal-actions}
    \mathcal A^*_{\mathcal H,g,t}(s)
    :=\operatorname*{arg\,max}_{a\in\mathcal A_{\mathcal H}}
    Q^*_{g,t}\!\left(s,\beta_{\mathcal H}(a,s)\right).
\end{equation}
We assume that this set is nonempty at every task-reachable state. The harness
is \emph{observation-sufficient} when
\begin{equation}
    \label{eq:supp-observation-sufficiency}
    \bigcap_{s\in\mathcal B_{\mathcal H}(h_t)}
    \mathcal A^*_{\mathcal H,g,t}(s)\neq\varnothing
\end{equation}
for every reachable \(h_t\) and \(t<H\). Thus, hidden states may share a
visible representation precisely when they retain a common optimal next
action. A harness satisfying
Eqs.~\eqref{eq:supp-action-sufficiency} and
\eqref{eq:supp-observation-sufficiency} is
\emph{decision-sufficient}.

\begin{theorem}[Decision sufficiency]
    \label{thm:decision-sufficiency}
    If a harness \(\mathcal H\) is action-sufficient at every task-reachable state
    and observation-sufficient at every reachable history, then there exists a
    policy \(\pi^*_{\mathcal H}\) depending only on the model-visible history
    \(h_t\) that attains \(V^*_{g,t}(s_t)\) at every reachable decision
    point. Its expected return from the initial-state distribution is therefore
    \begin{equation*}
        \mathbb E_{s_0\sim\mu_g}
        \left[V^*_{g,0}(s_0)\right].
    \end{equation*}
    Thus, the harness preserves the optimal return available through the CAD
    backend.
\end{theorem}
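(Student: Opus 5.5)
The plan is to define the policy explicitly from the intersection sets and then verify optimality by backward induction over $t=H,H-1,\ldots,0$.

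\textbf{Step 1: define the policy.} For every reachable history $h_t$ with $t<H$, Eq.~\eqref{eq:supp-observation-sufficiency} guarantees that $\bigcap_{s\in\mathcal B_{\mathcal H}(h_t)}\mathcal A^*_{\mathcal H,g,t}(s)$ is nonempty. Fix, by some deterministic tie-breaking rule, an action $a^*(h_t)$ from this intersection, and set $\pi^*_{\mathcal H}(h_t)=a^*(h_t)$. On unreachable histories the choice is arbitrary. By construction the policy is a function of $h_t$ alone, since $\mathcal B_{\mathcal H}(h_t)$ depends only on $h_t$ and the fixed harness.

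\textbf{Step 2: show the chosen action is optimal at the true state.} Along any trajectory generated by $\pi^*_{\mathcal H}$, the true state satisfies $s_t\in\mathcal B_{\mathcal H}(h_t)$ almost surely, by the definition of $\mathcal B_{\mathcal H}$ as the support of the posterior. Hence $a^*(h_t)\in\mathcal A^*_{\mathcal H,g,t}(s_t)$. Since $s_t$ is task-reachable, action sufficiency (Eq.~\eqref{eq:supp-action-sufficiency}) gives
\[
Q^*_{g,t}\bigl(s_t,\beta_{\mathcal H}(a^*(h_t),s_t)\bigr)=\max_{a}Q^*_{g,t}\bigl(s_t,\beta_{\mathcal H}(a,s_t)\bigr)=V^*_{g,t}(s_t).
\]
Task-reachability needs a brief check: $\beta_{\mathcal H}$ produces backend operations admissible for the task, so states visited under any harness policy are task-reachable.

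\textbf{Step 3: backward induction.} The inductive claim is that for every $t$, the expected return of $\pi^*_{\mathcal H}$ from step $t$, conditioned on $(h_t,s_t)$ along its trajectories, equals $V^*_{g,t}(s_t)$.
\begin{itemize}
\item The base case $t=H$ is trivial, since both sides are zero.
\item For the inductive step, the conditional return equals
\[
\mathbb E\bigl[r_g(s_t,u_t,s_{t+1})+\text{return}_{t+1}\bigr],
\]
with $u_t=\beta_{\mathcal H}(a^*(h_t),s_t)$ and $(s_{t+1},y_{t+1})\sim T_{\mathcal H}$. The induction hypothesis replaces $\text{return}_{t+1}$ by $V^*_{g,t+1}(s_{t+1})$. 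This uses the fact that $h_{t+1}=(h_t,a_t,y_{t+1})$ is reachable with $s_{t+1}\in\mathcal B_{\mathcal H}(h_{t+1})$, and that $y_{t+1}$ affects only the future policy, not the backend dynamics. The result is $Q^*_{g,t}(s_t,u_t)$, which equals $V^*_{g,t}(s_t)$ by Step 2.
\end{itemize}
Taking $t=0$ and averaging over $s_0\sim\mu_g$ gives $\mathbb E_{s_0\sim\mu_g}[V^*_{g,0}(s_0)]$. Matching the upper bound is immediate, because no policy can exceed $V^*$: every harness policy induces some backend operation sequence.

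\textbf{Main obstacle.} The delicate step is making Step 3 measure-theoretically honest. Two points need care.
\begin{itemize}
\item $\mathcal B_{\mathcal H}(h_t)$ is defined through $\Pr(s_t=s\mid h_t;\mathcal H)$, which itself depends on the policy generating the history. To avoid circularity, I would define reachability and $\mathcal B_{\mathcal H}$ with respect to the policy $\pi^*_{\mathcal H}$ being constructed, building it forward in $t$. Alternatively, I would use a policy-independent superset (all states consistent with $h_t$ under some admissible action sequence) and note that observation sufficiency is assumed for it.
\item I would also argue that the support statement "$s_t\in\mathcal B_{\mathcal H}(h_t)$ a.s." holds. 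To keep the argument clean, I would assume countable $\mathcal S$, which the paper's definition via positive point probabilities already implicitly does.
\end{itemize}
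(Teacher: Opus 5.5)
Your proposal is correct and follows the paper's own argument: pick $a^*(h_t)$ from the common intersection, then use $s_t\in\mathcal B_{\mathcal H}(h_t)$ and action sufficiency to get $Q^*_{g,t}(s_t,\beta_{\mathcal H}(a^*(h_t),s_t))=V^*_{g,t}(s_t)$, and close by backward induction and averaging over $\mu_g$; your Step~3 simply writes out the induction that the paper only names. Your circularity concern is milder than you suggest: for any policy that depends only on $h_t$, the factors $\pi(a_k\mid h_k)$ cancel in $\Pr(s_t=s\mid h_t;\mathcal H)$, so whenever $h_t$ has positive probability, $\mathcal B_{\mathcal H}(h_t)$ is determined by $\mu_g$, $P$, $\beta_{\mathcal H}$, and $O_{\mathcal H}$ alone.
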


\begin{proof}
    For any reachable history \(h_t\),
    Eq.~\eqref{eq:supp-observation-sufficiency} permits selecting an action
    \(a^*(h_t)\) from the common intersection. The true backend state
    \(s_t\) is necessarily an element of
    \(\mathcal B_{\mathcal H}(h_t)\); hence \(a^*(h_t)\) is optimal for every
    backend state consistent with what the model has observed. Action sufficiency
    in Eq.~\eqref{eq:supp-action-sufficiency} then ensures that the bound operation
    \(\beta_{\mathcal H}(a^*(h_t),s_t)\) attains \(V^*_{g,t}(s_t)\). Backward
    induction from the terminal step shows that the history-dependent policy
    defined by \(a^*(h_t)\) attains the optimal continuation value at every
    reachable decision point. Taking the expectation over
    \(s_0\sim\mu_g\) gives the expected value stated above.
\end{proof}

The theorem clarifies what a model-facing CAD interface must retain. The model
does not require direct access to the complete backend state or to every
low-level command. It requires an executable representation of an optimal
operation whenever one is needed, together with observations that distinguish
backend states whenever those states demand incompatible next actions.

The two conditions also describe distinct failure boundaries. If action
sufficiency fails at a reachable state, no action exposed by the harness can
attain \(V^*_{g,t}(s)\); increasing model capacity or adding policy data cannot
recover an operation absent from the interface. If observation sufficiency
fails at a reachable history, the states consistent with that history share no
optimal model action. Any policy conditioned only on the visible history must
therefore be suboptimal for at least one of them. The resulting expected loss
depends on how often the aliased states occur and on the value gap between the
chosen and optimal operations.

Missing actions and state aliasing consequently constrain different sides of
the harness-induced decision space. The former limits what the policy can
execute; the latter limits what it can distinguish. These boundaries determine
which action and observation details may be compressed while preserving the
optimal CAD policy.

\subsection{Semantic Decision Bottleneck}
\label{sec:supp-semantic-decision-bottleneck}

Decision sufficiency specifies what a harness must preserve, but does not
uniquely determine how those capabilities are represented to the model. Two
harnesses may connect to the same CAD backend and support the same task-level
operations while exposing different numbers of implementation choices,
different state granularity, and different forms of execution feedback. The
Semantic Decision Bottleneck compares these decision-sufficient
representations, adapting the compression principle of the information
bottleneck to a model-facing action--observation interface
\citep{tishby2000informationbottleneck}.

\paragraph{Decision-preserving compression.}
We call a transformation from harness \(\mathcal H_2\) to harness
\(\mathcal H_1\) a \emph{decision-preserving compression} when it satisfies
two conditions. On the action side, model actions may be merged only when, at
every task-reachable state where they are admissible, their bound backend
operations induce the same distribution over next states and the same task
reward for every transition of nonzero probability. The compressed harness
must also retain an executable representation of every consequence needed to
attain Eq.~\eqref{eq:supp-action-sufficiency}. This removes differences in
syntax or implementation without changing a decision-relevant CAD
consequence. On the observation side, visible histories may be identified only
when all backend states consistent with any of those histories still share an
optimal action in the compressed harness. This is precisely the criterion in
Eq.~\eqref{eq:supp-observation-sufficiency}, evaluated after the histories are
identified. The action condition retains an optimal executable consequence at
every task-reachable state, while the observation condition retains a common
optimal action for every compressed history. They therefore preserve action
and observation sufficiency, respectively.

For decision-sufficient harnesses, we write
\begin{equation}
    \label{eq:sdb-preference}
    \mathcal H_1\preceq_{\mathrm{SDB}}\mathcal H_2
\end{equation}
if \(\mathcal H_1\) can be obtained from \(\mathcal H_2\) through a
decision-preserving compression. The relation expresses a representation
preference under fixed decision capability: \(\mathcal H_1\) does not require
the model to distinguish action implementations or visible states that are
irrelevant to optimal CAD decisions in \(\mathcal H_2\).

Decision-preserving compressions can be applied successively: each step
retains the action consequences and state distinctions required by
Eqs.~\eqref{eq:supp-action-sufficiency} and
\eqref{eq:supp-observation-sufficiency}. The relation therefore supports
comparing harnesses by removable semantic structure without selecting a
particular action syntax or observation format as canonical.

\paragraph{Compression frontier.}
The Semantic Decision Bottleneck is the decision-sufficient frontier of this
relation. A harness lies on the frontier when every further semantic
compression violates at least one sufficiency condition. Further action
compression crosses this frontier when it removes the only
representation of an operation required by
Eq.~\eqref{eq:supp-action-sufficiency}. Further observation compression
crosses it when the common-action intersection becomes empty, aliasing states
that demand incompatible next actions. The relation need not select a unique
frontier harness: different action vocabularies or observation organizations
may be incomparable while preserving the same optimal CAD capability. The
bottleneck is therefore neither a unique interface nor the shortest one in
characters or tokens; it is the frontier of semantic compression subject to
preserving the optimal decision policy.

Equation~\eqref{eq:sdb-preference} concerns semantic redundancy rather than
token count itself. In a language-model interface, redundant action
implementations and state descriptions often increase model input or output,
so token use provides an observable measure of representation burden.
Nevertheless, semantic compression and token compression are not equivalent.
We therefore evaluate a harness jointly by task performance, observation
ablations, and token consumption. Nor does the relation by itself guarantee
easier optimization for every finite model; the matched SFT and online RL
experiments in the main text test whether the induced representation improves
policy learning in practice.

\paragraph{Multi-objective harness optimization.}
A natural direction for future work is to treat harness construction as an
explicit multi-objective optimization problem. Let
\(L_{\mathrm{in}}\) and \(L_{\mathrm{out}}\) denote the numbers of
model-facing input and output tokens, respectively, and let
\(N_{\mathrm{inspect}}\) denote the number of targeted inspection operations.
One may seek a Pareto-efficient harness according to
\begin{equation}
    \label{eq:sdb-multiobjective}
    \min_{\mathcal H\in\mathfrak H_{\mathrm{suff}}}
    \mathbf C(\mathcal H)
    =
    \left(
    \mathbb E[L_{\mathrm{in}}],
    \mathbb E[L_{\mathrm{out}}],
    \mathbb E[N_{\mathrm{inspect}}]
    \right),
\end{equation}
where \(\mathfrak H_{\mathrm{suff}}\) is the set of harnesses satisfying both
action and observation sufficiency, and the expectations are taken over the
task distribution and policy-induced trajectories. Different deployment
settings may scalarize these objectives using different weights: for example,
interactive systems may penalize inspection latency more heavily, whereas
resource-constrained deployments may prioritize context and generation
length. More broadly, the harness and policy could be optimized jointly,
subject to semantic sufficiency constraints. We leave automated exploration
of this harness--policy Pareto frontier to future work.

\section{Implementation Details}

The main text characterizes a decision-sufficient CAD harness through action
and observation sufficiency. This section follows an actual modeling decision
through \textsc{DeciCAD} to show how these conditions are instantiated. The
model first expresses geometric intent in a semantic action space, the harness
grounds that intent in the current CAD state, and the environment returns the
execution evidence needed for the next decision. Together, these mechanisms
determine the CAD policies that the model can learn and execute.

\subsection{Semantic CAD Action Space}

A modeling policy first requires an action vocabulary capable of expressing
design intent. \textsc{DeciCAD} represents each model output as a CAD action
with geometric arguments. The vocabulary covers documents and artifacts,
sketches and constraints, solid features, Booleans and patterns, edge
treatments, object placement, assembly, and state inspection. Each action
exposes the variables that determine the CAD state transition. An extrusion,
for example, specifies its profile, material mode, extent, and direction,
while object resolution and CAD API invocation remain within the harness. This
feature-oriented organization follows structured construction histories used
in parametric CAD corpora while exposing them as executable interaction steps
\citep{wu2021deepcad,willis2020fusion360gallery,khan2024text2cad}.

The following abstract interface summarizes the model-visible action space.
\texttt{Ref<T>} denotes a stable environment-generated object reference that
subsequent actions can reuse; the expression after \texttt{->} describes the
geometric postcondition returned after execution.

\begin{lstlisting}
CADAction ::=
    Document {Create, Open, Save, Export}
  | Sketch   {Create, AddGeometry, Constrain,
              Pattern, Trim, Project}
  | Solid    {CreateBody, Extrude, Revolve, Sweep,
              Loft, Boolean, Repeat, Fillet,
              Chamfer, Shell}
  | Entity   {SetPlacement}
  | Assembly {Create, AddComponent, Ground, AddMate,
              RemoveMate, Solve}
  | Inspect  {List, Get}

Extrude(
    profile:   Ref<Sketch>,
    mode:      Add | Cut,
    extent:    Distance(d) | ThroughAll,
    direction: SketchNormal | Vector3D
) -> {
    feature:     Ref<SolidFeature>,
    body:        Ref<Body>,
    solid_count: Int,
    volume_mm3:  Float
}
\end{lstlisting}

This representation identifies behaviorally equivalent choices of variable
names, control flow, and API syntax as the same action, while retaining the
parameters that determine geometric outcome. Stable references compose these
actions: a sketch can produce a solid feature; that solid can participate in
Booleans, transformations, and edge treatments; and parts can enter an
assembly and receive constraints. Each action describes a local geometric
transition, while an action sequence incrementally constructs the full design.
For the task distribution considered in this work, this compositional closure
allows long-horizon modeling policies to remain within one semantic action
space, instantiating action sufficiency in the main text.

\subsection{Grounding Semantic Actions in CAD State}

Semantic actions specify what the model can select; state grounding determines
what a selection means in the current design. A point, direction, or rotation
has a unique geometric meaning only relative to an explicit coordinate frame.
\textsc{DeciCAD} therefore interprets all action arguments under a shared
spatial contract. Linear quantities use millimeters and angular quantities use
degrees. Documents use a right-handed world frame with (+X) right, (+Y)
back, and (+Z) up, and rotations follow the right-hand rule. Two-dimensional
geometry lies in the right-handed local frame of its sketch, while solid
placement explicitly selects world or parent coordinates. System instructions
state these rules, typed arguments mark the space of each value, and the
execution layer reports authoritative frames read from the actual CAD
placement.

\begin{lstlisting}
SpatialContract {
    length_unit = millimeter
    angle_unit  = degree
    world_frame = right_handed(+X right, +Y back, +Z up)
    rotation    = right_hand_rule(axis, angle)

    Point2D   := coordinate in Ref<Sketch>.local_frame
    Point3D   := coordinate in world_frame
    Placement(space = World | Parent)
    Frame     := {origin, unit_x, unit_y, unit_z}
}

CreateSketch(support, plane)
    -> {sketch: Ref<Sketch>, local_frame: Frame}

SetPlacement(target, placement, space)
    -> {target, effective_placement, space}
\end{lstlisting}

The spatial contract fixes the meaning of arguments, while object references
fix their targets. Documents, entities, and reusable topological elements have
stable identifiers returned by creation or inspection and reused unchanged in
later actions. The harness resolves these references in the current document,
executes the native CAD operation, and verifies action-specific
postconditions. A valid result is committed to the persistent document so that
later actions operate on the resulting state; a failed result preserves the
previous state or marks the execution outcome as unknown.

\begin{lstlisting}
Execute(action, state s_t):
    args       <- validate(action, SpatialContract)
    objects    <- resolve(action.references, s_t)
    transaction.begin()
    native_out <- apply(action.name, args, objects)
    evidence   <- verify_postcondition(native_out)

    if evidence is valid:
        transaction.commit()
        return Success(evidence), s_{t+1}
    else:
        transaction.rollback()
        return Failure(recovery_evidence), s_t
\end{lstlisting}

Action grounding is therefore more than a lookup from a tool name to a backend
function. The shared spatial contract determines parameter semantics, stable
references determine the operated objects, and transactions with postcondition
checks determine whether an action actually took effect. Every successful
action in the model history corresponds to a verified transition in the
persistent CAD document. Together, these constraints implement the binding
map defined in the main text.

\subsection{Decision-Oriented Environment Observations}

After an action executes, the model must continue from the resulting
environment state. Expanding the complete CAD document repeatedly introduces
objects and properties irrelevant to the current decision, whereas reporting
only call-level success cannot support geometric judgment or recovery.
\textsc{DeciCAD} separates these needs: passive execution feedback describes
the state transition that just occurred, while active hierarchical inspection
retrieves local state needed by the next decision.

\subsubsection{Passive Execution Feedback}

The harness returns success or failure after every action. A success response
contains affected objects, action-specific postconditions, and the document
generation. A failure response contains an error category, the relevant
argument or object, and a recovery direction. Invalid input, missing
references, geometric failure, conflict, timeout, and unknown outcome remain
distinct because they call for different next actions.

\begin{lstlisting}
ActionResult :=
    Success {
        affected:       [StableRef],
        postcondition:  ActionSpecificEvidence,
        document_state: Generation
    }
  | Failure {
        category: InvalidInput | MissingObject |
                  InvalidState | GeometryFailure |
                  Conflict | Timeout |
                  BackendUnavailable | OutcomeUnknown,
        location: Optional<ArgumentPath | StableRef>,
        message:  ShortExplanation,
        recovery: RetryWithArguments | InspectState |
                  ChooseAnotherAction
    }
\end{lstlisting}

For example, a successful extrusion returns its generated feature, owning
body, and solid count; invalid arguments identify the responsible field; and
an unknown outcome directs the model to inspect the document before acting
again. These distinctions determine whether the policy should continue
modeling, revise an argument, or first verify state. Passive feedback therefore
retains causal evidence for local continuation and recovery while keeping each
step compact.

\subsubsection{Active Hierarchical Inspection}

Passive feedback focuses on the action just completed, but later decisions may
depend on a broader portion of the design. An edge treatment may require
locating topological edges with particular geometry, while an assembly mate
may require checking component placement and connectivity. For these cases,
\texttt{InspectList} first discovers relevant state. The model selects a
bounded collection---such as entities, children, topology, sketch geometry,
constraints, dependencies, or assembly interference---and receives stable
references with compact summaries through filtering and pagination.
\texttt{InspectGet} then reads the canonical detail of one precise reference,
including identity, placement or local frame, bounding box, shape or constraint
summary, relations, and validity.

\begin{lstlisting}
InspectList(
    collection: Documents | Entities | Children |
                Topology | SketchGeometry |
                SketchConstraints | Dependencies |
                Interferences | ArtifactComponents,
    scope:  Optional<Ref<Document | Entity> | ArtifactPath>,
    filter: Optional<{name_pattern, entity_type}>,
    offset: Int,
    limit:  Int
) -> {
    items: [{ref, kind, compact_summary}],
    generation: Generation,
    next_offset: Optional<Int>
}

InspectGet(
    target: Ref<Document | Entity | Face | Edge | Vertex>
            | ArtifactPath
) -> {
    identity, placement_or_frame, bounding_box,
    shape_or_constraint_summary, relation_summary, validity
}
\end{lstlisting}

The two inspection levels answer, respectively, which objects are relevant to
the current decision and what the authoritative state of one target is. A
policy can first list the topological edges of a solid, select candidates from
their summaries, retrieve exact geometric properties, and reuse the same
reference in a fillet action. Separating discovery from retrieval bounds each
observation while allowing the model to recover state distinctions that change
the next action. Passive feedback and active inspection thereby instantiate
observation sufficiency in the main text.

\subsection{Implementing the Semantic Decision Bottleneck}

Across this loop, \textsc{DeciCAD} coordinates compression of model-facing
actions and observations. Semantic actions merge behaviorally equivalent
implementation forms, while the spatial contract, stable references, and
execution verification ensure that the compressed actions still denote
well-defined CAD transitions. Passive feedback omits the complete internal
state, and active inspection restores local distinctions when a decision
requires them. Compression therefore remains constrained by action and
observation sufficiency.

Together, these mechanisms instantiate the Semantic Decision Bottleneck from
the main text. The model generates CAD decisions rather than their expanded
implementations and observes state evidence supporting the next decision
rather than the full internal representation of the CAD system. Coordinating
actions, grounding, and observations in the same closed loop reduces
representation burden without removing the decision paths required for task
completion and error recovery.

In the concrete code organization, these mechanisms are separated into three
connected components. The harness runtime organizes the decision loop across
model invocation, action execution, and feedback. An interface-contract layer
defines semantic actions, geometric representations, and execution results.
A backend execution layer binds those contracts to native operations in a
specific CAD system. This organization allows the interaction policy, the
model-visible interface, and CAD execution logic to evolve independently. A
new CAD backend implements the same action and feedback contracts while
preserving the model-facing decision space.

\section{Supplementary Experiments}

This section completes the evaluation protocol used in the main text and
further analyzes how the harness affects CAD capability, interaction
representation, and policy learning. The main text reports aggregate geometry
results, difficulty trends, output scaling, observation ablations, policy
performance, and learned behavior. Here we provide the protocols and evidence
not shown in the main text: difficulty calibration and per-quantile geometry,
the geometry--context trade-off, qualitative assets, supervision construction,
online-RL design, and a per-task trajectory example. All analyses retain the frozen tasks, models,
and evaluator from the main experiments.

\subsection{Complete Experimental Setup}

We first specify the common benchmarks, baselines, difficulty calibration,
geometry evaluator, and policy-learning protocol underlying the subsequent
analyses.

\paragraph{Benchmarks and baselines.}
The harness-level evaluation covers four CAD settings. CADPrompt uses its
complete evaluation set. We manually reviewed CADInstruct and P3D-Text-to-3D
and fixed their most challenging subsets before running any method.
P3D-Assembly-3D comprises multimodal assembly tasks that provide both textual
requirements and target images. Together, the benchmarks cover parametric
part modeling, complex text-conditioned geometry, and multi-component
assembly \citep{alrashedy2025cadcode,lv2025cadinstruct,khan2024text2cad,
    willis2020fusion360gallery}.

\begin{table}[h]
    \centering
    \footnotesize
    \setlength{\tabcolsep}{3pt}
    \begin{tabular}{@{}lccc@{}}
        \toprule
        Benchmark & Samples & Input & Task type \\
        \midrule
        CADPrompt & 200 & Text & Part modeling \\
        P3D-Text-to-3D & 100 & Text & Part modeling \\
        CADInstruct & 100 & Text & Part modeling \\
        P3D-Assembly-3D & 203 & Text + image & Assembly modeling \\
        \bottomrule
    \end{tabular}
    \caption{Statistics of the benchmarks used in our experiments.}
    \label{tab:benchmark-statistics}
\end{table}

Table~\ref{tab:benchmark-statistics} summarizes the input modality and task
type of each benchmark. Together they span text-conditioned part modeling and
multimodal assembly with progressively richer geometric and relational
requirements.

We compare zero-shot CadQuery, CADSmith, CAD-Assistant, Claude Code, and
\textsc{DeciCAD}. Zero-shot emits a complete CadQuery program in one response.
CADSmith coordinates planning, program generation, and verification across
multiple agents. CAD-Assistant iteratively revises a design through a Python
interpreter and external CAD operations. Claude Code provides a general coding
harness for CadQuery generation, execution, and repair. \textsc{DeciCAD}
instead exposes semantic CAD actions and object-level environment feedback
\citep{mallis2024cadassistant,barkley2026cadsmith,
    anthropic2026claudecode}.
Harness-level experiments use Claude Opus 4.7, identical task inputs and
geometry evaluation, and only the system instructions required by each
interface \citep{anthropic2026opus47}.

\begin{figure*}[t]
    \centering
    \begin{subfigure}[t]{0.45\textwidth}
        \includegraphics[width=\linewidth]{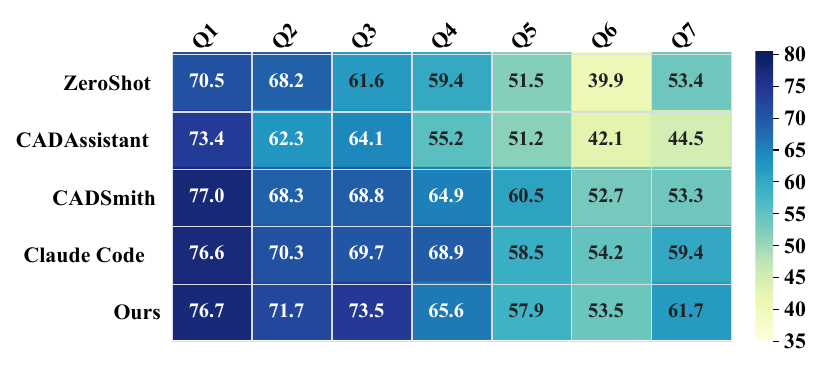}
        \caption{CADPrompt}
    \end{subfigure}\hfill
    \begin{subfigure}[t]{0.45\textwidth}
        \includegraphics[width=\linewidth]{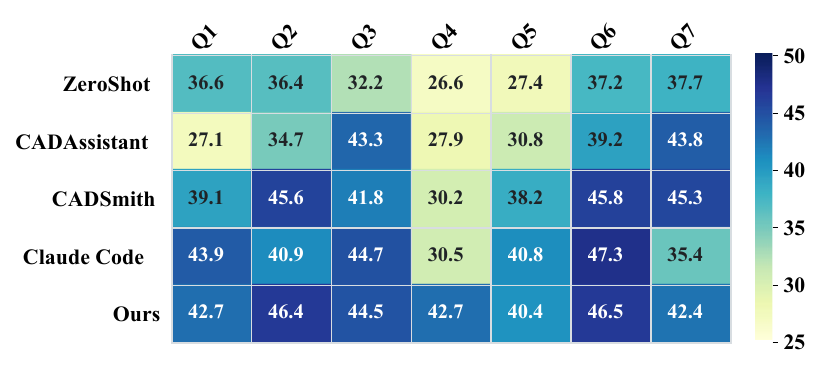}
        \caption{P3D-Text-to-3D}
    \end{subfigure}

    \begin{subfigure}[t]{0.45\textwidth}
        \includegraphics[width=\linewidth]{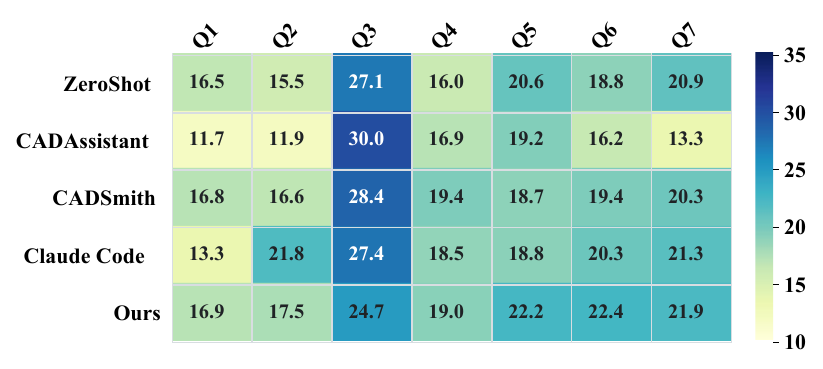}
        \caption{CADInstruct}
    \end{subfigure}\hfill
    \begin{subfigure}[t]{0.45\textwidth}
        \includegraphics[width=\linewidth]{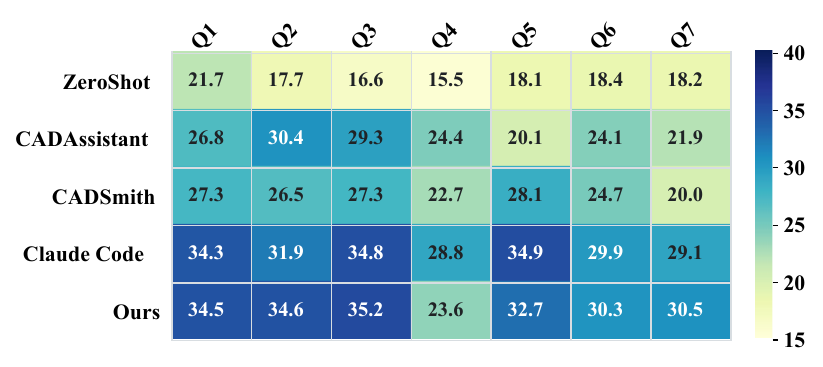}
        \caption{P3D-Assembly-3D}
    \end{subfigure}
    \caption{Mean end-to-end F1 across calibrated within-dataset difficulty quantiles.}
    \label{fig:supp-f1-quantiles}
\end{figure*}

\paragraph{Difficulty calibration.}
Aggregate means alone do not locate a method's boundary on complex tasks. For
task \(x\), let \(s_{\mathrm{sem}}(x)\) denote semantic complexity,
\(s_{\mathrm{mesh}}(x)\) reference-mesh complexity, and \(a(x)\) description
ambiguity. We first define geometric complexity and then combine it with
ambiguity:
\begin{align}
    g_{\mathrm{geo}}(x)
     & =\alpha s_{\mathrm{sem}}(x)+\beta s_{\mathrm{mesh}}(x), \\
    d(x)
     & =\lambda_1g_{\mathrm{geo}}(x)+\lambda_2a(x)
    +\lambda_3g_{\mathrm{geo}}(x)a(x).
\end{align}
The coefficients are fixed across benchmarks. The interaction term assigns
higher difficulty when complex geometry is also described ambiguously. We
compute \(d(x)\) by the same rule for every sample and form seven
equal-frequency quantiles \(Q1\)--\(Q7\) within each dataset. Quantiles thus
represent relative difficulty within a benchmark rather than identical
absolute complexity across benchmarks.

\begin{table}[h]
    \centering
    \footnotesize
    \begin{tabular}{@{}ll@{}}
        \toprule
        Component & Setting \\
        \midrule
        Student / teacher & Qwen3.5-9B / Claude Sonnet 4.6 \\
        SFT LoRA & rank 32, alpha 64, dropout 0.05 \\
        SFT learning rate & $1\times10^{-5}$, cosine \\
        SFT context & 65,536 tokens \\
        Checkpoint interval & 25 optimizer steps \\
        GRPO LoRA & rank 32, alpha 64 \\
        Rollout group & 16 tasks $\times$ 8 samples \\
        Rollout context & 131,072 tokens \\
        Sampled-sequence cap & 65,536 tokens \\
        Hardware & Eight NVIDIA H800 GPUs \\
        \bottomrule
    \end{tabular}
    \caption{Shared policy-learning configuration.}
    \label{tab:supp-policy-settings}
\end{table}

\paragraph{Geometry and execution evaluation.}
Generated and reference assets are converted to a common mesh representation,
scale-normalized, center-aligned, and rigidly aligned by ICP. Chamfer Distance
(CD) and surface F1 use \(10{,}000\) uniformly sampled surface points per
asset; the F1 nearest-neighbor threshold and IoU voxel resolution are both
\(0.01\). Geometry Availability is the fraction of tasks producing a nonempty,
parseable asset that enters geometry evaluation. F1, IoU, and CD are computed
over those valid assets.
For the difficulty-stratified
Figures~\ref{fig:supp-f1-quantiles} and
\ref{fig:supp-f1-slopes}, we additionally report end-to-end F1:
missing, unparsable, or invalid outputs are assigned F1=0.
Availability and conditional geometry quality are
kept separate rather than collapsed into a single success score.
The surface-distance and occupancy measures follow established point-set and
implicit-geometry evaluation practice
\citep{fan2017pointset,mescheder2019occupancy}.

Table~\ref{tab:supp-policy-settings} reports the shared policy-learning
configuration. The two harness tracks use the same student and teacher models,
LoRA capacity, context limits, checkpoint schedule, and hardware.

\paragraph{Policy-learning protocol.}
Policy learning uses the Text2CAD/DeepCAD 800/200 train--test split. Claude
Sonnet 4.6 generates teacher trajectories independently in the coding harness
and \textsc{DeciCAD}; Qwen3.5-9B is the shared student. Both tracks use the
same task descriptions, reference geometry, CAD environment, and evaluator.
The final test set is excluded from trajectory selection, balancing,
hyperparameter adjustment, reward design, and checkpoint selection.

SFT and online RL use BF16 LoRA. The SFT tracks share learning rate, batch
configuration, context length, and LoRA settings. GRPO begins from the selected
SFT checkpoint for each harness and samples student rollouts through the same
interface used during supervision.

\begin{figure*}[t]
    \centering
    \begin{subfigure}[t]{0.24\textwidth}
        \includegraphics[width=\linewidth]{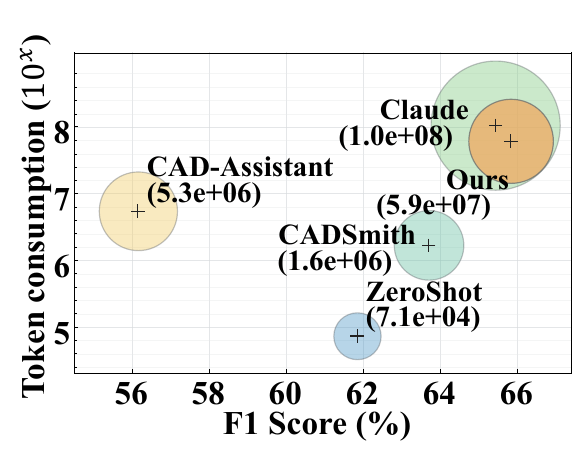}
        \caption{CADPrompt}
    \end{subfigure}\hfill
    \begin{subfigure}[t]{0.24\textwidth}
        \includegraphics[width=\linewidth]{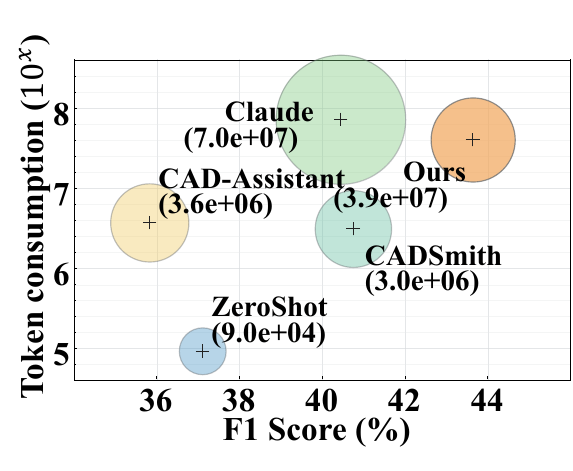}
        \caption{P3D-Text-to-3D}
    \end{subfigure}\hfill
    \begin{subfigure}[t]{0.24\textwidth}
        \includegraphics[width=\linewidth]{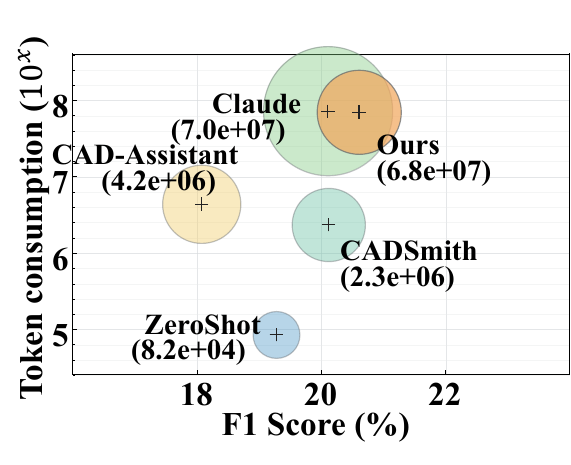}
        \caption{CADInstruct}
    \end{subfigure}\hfill
    \begin{subfigure}[t]{0.24\textwidth}
        \includegraphics[width=\linewidth]{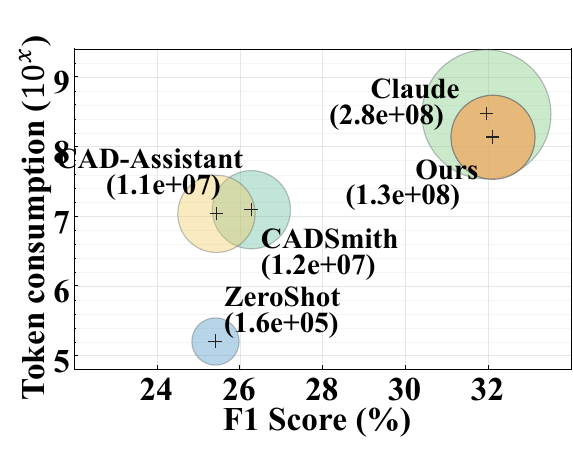}
        \caption{P3D-Assembly-3D}
    \end{subfigure}
    \caption{Geometry--context trade-off across benchmarks. Each point reports
        mean F1 and cumulative model-visible input tokens for one method.}
    \label{fig:supp-f1-input}
\end{figure*}

\subsection{Extended Harness Comparison}

We extend the aggregate comparison in the main text along task difficulty,
model-visible context, and representative output geometry.

\begin{figure}[h]
    \centering
    \includegraphics[width=\linewidth]{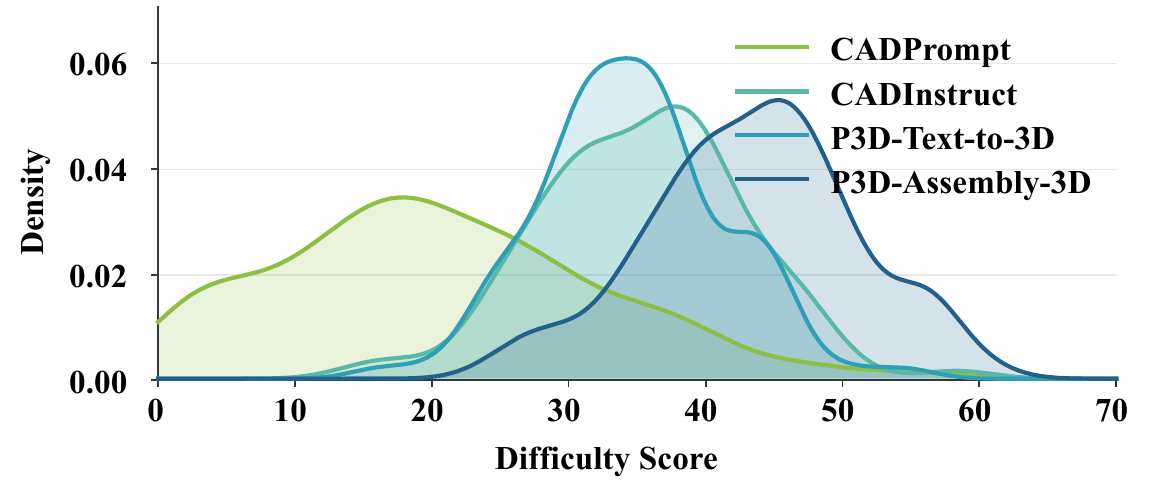}
    \caption{Calibrated difficulty distributions of the four benchmarks.}
    \label{fig:supp-difficulty-distribution}
\end{figure}

\paragraph{Difficulty distribution and stratified performance.}
Figure~\ref{fig:supp-difficulty-distribution} shows the calibrated difficulty
distributions. CADPrompt concentrates in the lower range, CADInstruct and
P3D-Text-to-3D shift toward greater complexity, and P3D-Assembly-3D occupies
the highest range. This ordering reflects the additional need to preserve
component identity, relative placement, and cross-object relations in
assembly.

Figure~\ref{fig:supp-f1-quantiles} expands the aggregate F1 results across
\(Q1\)--\(Q7\). \textsc{DeciCAD} leads 14 of the 28 dataset--quantile cells,
more than any baseline, and exceeds Claude Code at \(Q7\) on every benchmark.
It ranks first in five of seven P3D-Assembly-3D quantiles and leads from \(Q5\)
through \(Q7\) on CADInstruct. On P3D-Text-to-3D, its F1 remains ahead of
Claude Code at both \(Q4\) (42.7 versus 30.5) and \(Q7\) (42.4 versus 35.4).

The aggregate advantage is therefore not driven only by easy tasks. If the
semantic action space omitted operations required by complex designs,
performance would deteriorate systematically at high difficulty. Its
competitive \(Q7\) behavior instead indicates that compressing
implementation-level choices retains the necessary CAD transitions.
CADInstruct remains non-monotonic, so difficulty serves as a stratification
variable rather than a deterministic predictor of failure.

Figure~\ref{fig:supp-f1-slopes} in the main text summarizes these quantile
trends with linear fits. Because CADInstruct is non-monotonic, its slope is
used only as a compact summary of the heatmap.

\begin{figure*}[t]
    \centering
    \includegraphics[width=0.75\textwidth]{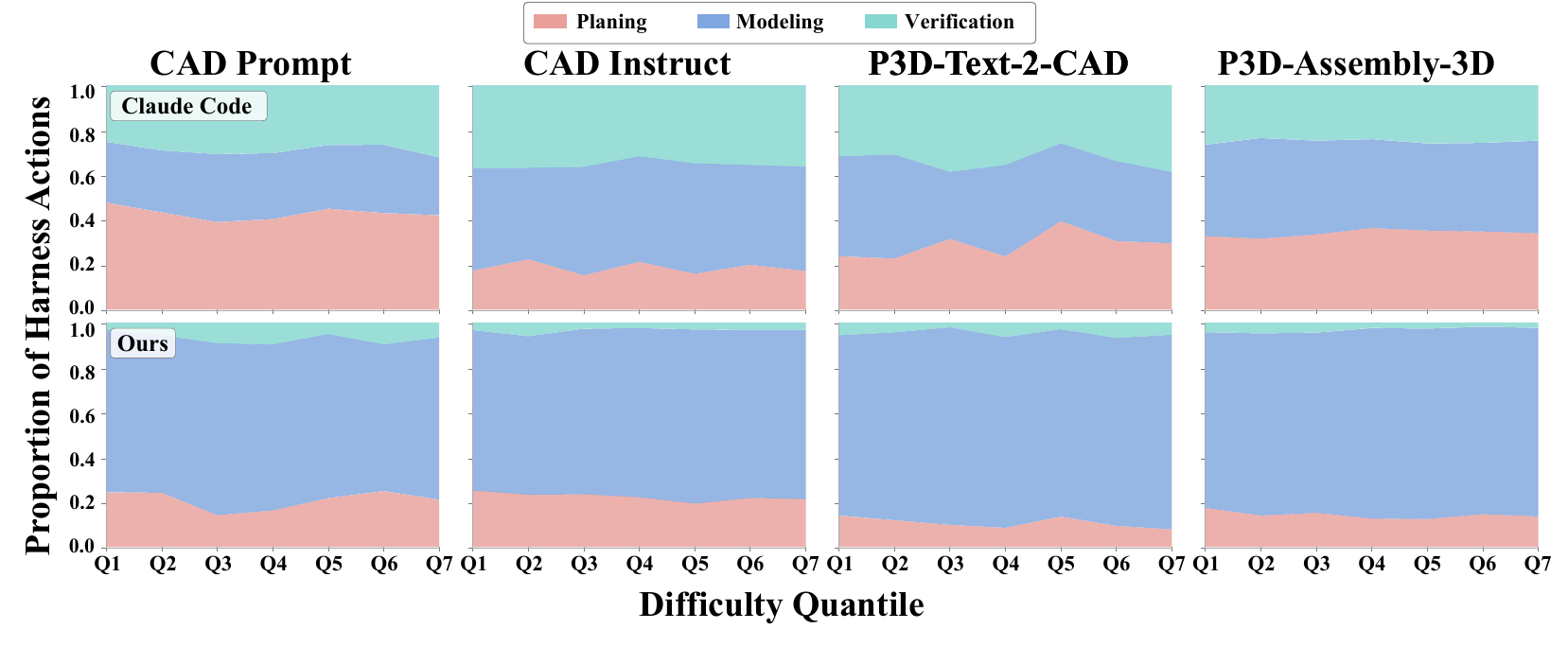}
    \caption{Allocation of observable decisions to Planning, Modeling, and
        Verification across datasets and difficulty quantiles. The upper and
        lower rows show Claude Code and \textsc{DeciCAD}, respectively.}
    \label{fig:supp-decision-allocation}
\end{figure*}

\begin{figure*}[t]
    \centering
    \includegraphics[width=0.485\textwidth]{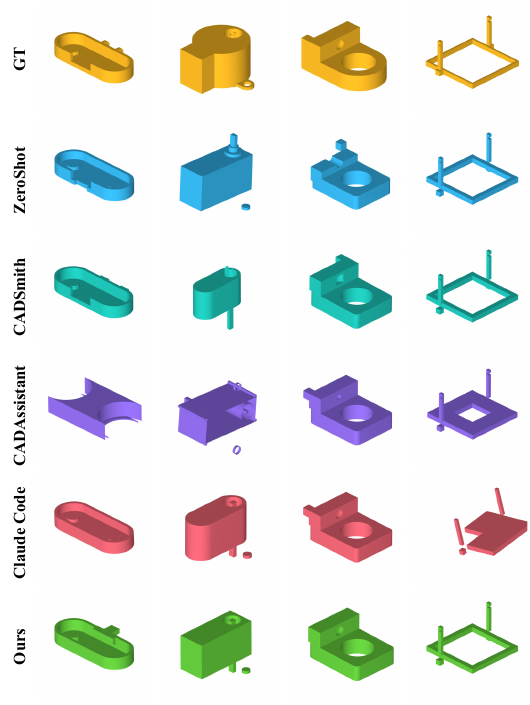}
    \hfill
    \includegraphics[width=0.485\textwidth]{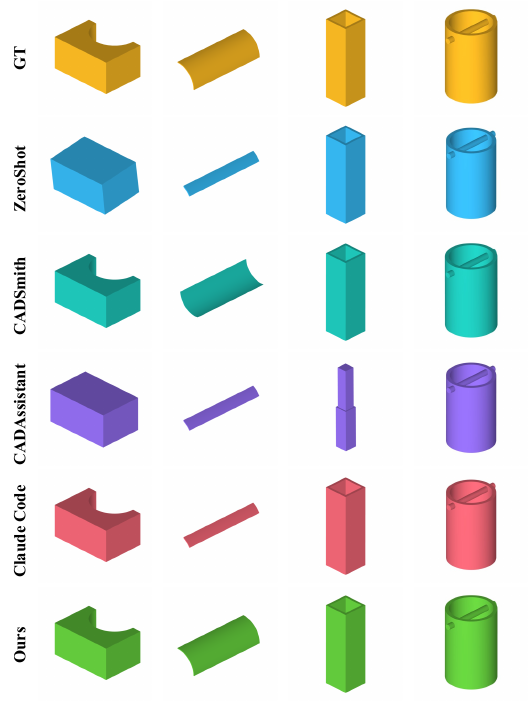}
    \caption{Qualitative comparison on P3D-Text-to-3D (left) and CADPrompt
        (right).}
    \label{fig:supp-qualitative-text}
\end{figure*}

\paragraph{Geometry--context trade-off.}
Figure~\ref{fig:supp-f1-input} jointly compares geometric quality and
model-visible input tokens. On the three text-to-CAD benchmarks,
\textsc{DeciCAD} occupies a stronger performance--cost region than Claude
Code: it attains higher F1 while consuming substantially less input context.
The geometric advantage therefore cannot be attributed to exposing the model
to a larger interaction history. P3D-Assembly-3D is more demanding for both
harnesses because multi-object references and repeated state updates enlarge
the context, yet \textsc{DeciCAD} retains a competitive trade-off. Together
with the output-side analysis below, these results separate representation
efficiency from trajectory truncation or reduced task coverage.

\paragraph{Output scaling with interaction length.}
Figure~\ref{fig:supp-turns-output} in the main text compares interaction turns
with cumulative output at the task level and shows that the output reduction
does not arise from early termination.

\paragraph{Decision allocation.}
We group decisions as Planning, Modeling, or Verification according to their
external effect. Figure~\ref{fig:supp-decision-allocation} shows that
\textsc{DeciCAD} consistently devotes a larger share of observable decisions
to Modeling across datasets and difficulty levels, whereas Claude Code devotes
a larger combined share to Planning and Verification.

\paragraph{Qualitative comparison.}
Figure~\ref{fig:supp-qualitative-text} shows representative assets from
P3D-Text-to-3D and CADPrompt. Zero-shot and lightweight specialized agents
more often omit local features or produce unevaluable artifacts. Claude Code
usually recovers the global shape but may retain geometric-detail errors
despite executable code. \textsc{DeciCAD} more consistently preserves the
requested profiles, openings, and repeated features, matching the aggregate
trends in F1, IoU, and Availability.

\subsection{Harness-Grounded SFT}

We next describe how closed-loop teacher trajectories are converted into
causal supervision and how the resulting policies are selected and evaluated.

\paragraph{Causal decision supervision and data composition.}
Policy supervision must preserve the decision conditions under which each
action was produced. For every teacher action \(a_t\), we reconstruct the
model-visible history \(h_t\) immediately before that action, including the
system instruction, task, preceding model actions, tool results, and available
tool schema. The target is the teacher's next coding response or semantic CAD
action. When a trajectory contains a local failure, the failure result remains
in the history and supervision is applied to the subsequent recovery decision.
This causal construction follows the principle that interactive policies must
be trained on histories induced by preceding actions rather than on future
trajectory information \citep{ross2011reduction,qin2023toolllm}.

Both datasets originate from the official Text2CAD training split. Claude
Sonnet 4.6 executes the same tasks independently through the Coding and
\textsc{DeciCAD} harnesses, and the resulting assets are evaluated against the
same target geometry. Geometry- and protocol-audited trajectories are then
expanded into causal decision examples
\citep{khan2024text2cad,anthropic2026sonnet46,qwen2026qwen35}. The
\textsc{DeciCAD} dataset contains
642 audited tasks, divided by task identity into 542 training and 100
development tasks. Each task retains at most 11 regular and three corrective
decisions covering initialization, core modeling, inspection, recovery, and
artifact persistence. The Coding dataset contains 499 geometry-valid tasks,
of which 471 are used for training and 28 for development, yielding 3,651
training targets and 207 development targets; 336 training targets follow an
observed execution error.

\begin{table}[h]
    \centering
    \footnotesize
    \setlength{\tabcolsep}{3pt}
    \begin{tabular}{@{}lccc>{\raggedright\arraybackslash}p{1.8cm}@{}}
        \toprule
        Harness & Audited tasks & Train / dev. & Target cap & Recovery targets \\
        \midrule
        Coding & 499 & 471 / 28 & 12 & Up to 4 observed \\
        \textsc{DeciCAD} & 642 & 542 / 100 & 11 + 3 & Up to 3 observed or controlled \\
        \bottomrule
    \end{tabular}
    \caption{SFT supervision obtained through the two harnesses. Both tracks
        use the same student model and optimization protocol.}
    \label{tab:supp-sft-data}
\end{table}

The two tracks share the source tasks, teacher and student models, causal
truncation, context limit, and optimization protocol, while retaining the
natural granularity of their respective decision spaces. \textsc{DeciCAD}
examples are stratified by task difficulty, CAD stage, accumulated document
state, preceding action and outcome, history pattern, and target action class.
Generic query and computation actions are capped, whereas geometry-changing,
inspection, and recovery actions retain their causal histories. Coding data
follow the same task-level filtering principle and prioritize genuine recovery responses.

Table~\ref{tab:supp-sft-data} records the audited task coverage and target
construction used for the two SFT tracks. These counts reflect the executable
supervision available in each harness-induced decision space.

\paragraph{Checkpoint selection and learned policies.}
Both tracks save a checkpoint every 25 optimizer steps. Training and
token-level validation loss monitor optimization, while the final checkpoint
is selected on frozen development tasks using closed-loop Availability, F1,
IoU, and CD.

SFT equips both Qwen3.5-9B policies with closed-loop CAD capability, but the
learned outcomes differ. Coding SFT raises Availability from \(12.5\%\) to
\(60.5\%\), F1 from \(0.5902\) to \(0.6460\), and IoU from \(0.5871\) to
\(0.6286\), while CD changes from \(0.00972\) to \(0.01130\). The student
therefore acquires program construction, execution, and repair, although local
geometric error remains after the large coverage expansion. \textsc{DeciCAD}
SFT raises Availability from \(61.5\%\) to \(66.5\%\), F1 from \(0.6883\) to
\(0.7437\), and IoU from \(0.6487\) to \(0.6631\), while reducing CD from
\(0.01023\) to \(0.00782\). Semantic CAD supervision thus improves both asset
coverage and the geometry of valid outputs. The aggregate stage-wise geometry
results and the resulting policy behavior are reported in the main text; below
we detail the online optimization that produces them.

\subsection{Geometry-Grounded Online RL}

Starting from the selected SFT policies, we detail the curriculum, reward, and
update constraints used for online optimization before analyzing the learned
behavior.

\paragraph{Two-stage online optimization.}
Online RL continues from states induced by the current student. Training tasks
serve four roles: \texttt{completion\_frontier} tasks frequently fail to form a
valid asset; \texttt{recovery} tasks expose locally recoverable failures;
\texttt{quality\_frontier} tasks complete but admit geometric improvement; and
\texttt{anchor} tasks are solved reliably and constrain forgetting. The first
stage emphasizes completion and recovery to expand valid-asset coverage. Its
policy becomes the frozen parent of a second, geometry-oriented stage. The
final second-stage iteration samples 16 tasks---12 quality-frontier, two
completion-frontier, and two anchor tasks---with eight rollouts per task.
This ordering combines curriculum learning with group-relative online
optimization used in tool-learning policies
\citep{bengio2009curriculum,shao2024deepseekmath,qian2025toolrl,
    gong-etal-2026-toolcad}.

Each group contains three conservative, three standard, and two exploratory
rollouts with \((T,p)=(0.65,0.90)\), \((0.90,0.95)\), and
\((1.10,0.98)\), respectively. Interaction budgets increase with task stage
from 32, 40, and 48 to 64 decisions. Both stages retain the same harness, tool
contracts, and terminal evaluator; the second stage changes policy preference
within the existing CAD decision space.

\paragraph{Geometry and process reward.}
For a rollout with valid terminal geometry, the normalized geometry reward is
\begin{equation}
    R_{\mathrm{geo}}
    =w_F F1^{p_F}+w_I IoU^{p_I}
    +w_C\exp\!\left(-\frac{CD}{\tau_C}\right),
\end{equation}
where \(w_F=0.45\), \(w_I=0.35\), \(w_C=0.20\), \(p_F=p_I=2\), and
\(\tau_C=0.002\). Squaring retains discrimination in the high-quality region,
and the exponential maps unbounded CD to \([0,1]\). Missing or unparsable
assets, and assets with \(R_{\mathrm{geo}}<0.15\), receive zero geometry reward.

Process evidence describes observable CAD progress. At decision \(t\), let \(v_t\),
\(c_t\), \(r_t\), and \(b_t\) indicate a valid call, a new state change,
recovery from the preceding failure, and repeated failure or no-progress
regression. Then
\begin{equation}
    R_{\mathrm{proc},t}=\operatorname{clip}
    \left(0.20v_t+0.55c_t+0.25r_t-0.45b_t,-1,1\right),
\end{equation}
and trajectory process reward is its decision-wise mean. Protocol reward is
the valid-call fraction. The second-stage reward is
\begin{equation}
    R=
    \begin{cases}
        0,                       & R_{\mathrm{geo}}=0, \\
        \eta R_{\mathrm{geo}}+\rho R_{\mathrm{proc}}
        +\xi R_{\mathrm{proto}}, & R_{\mathrm{geo}}>0,
    \end{cases}
\end{equation}
with \((\eta,\rho,\xi)=(0.94,0.05,0.01)\). Zero-geometry gating places the
terminal asset first, while the smaller process and protocol terms distinguish
no-progress behavior, invalid calls, and executable modeling sequences.

\begin{figure*}[t]
    \centering
    \includegraphics[width=0.96\textwidth]{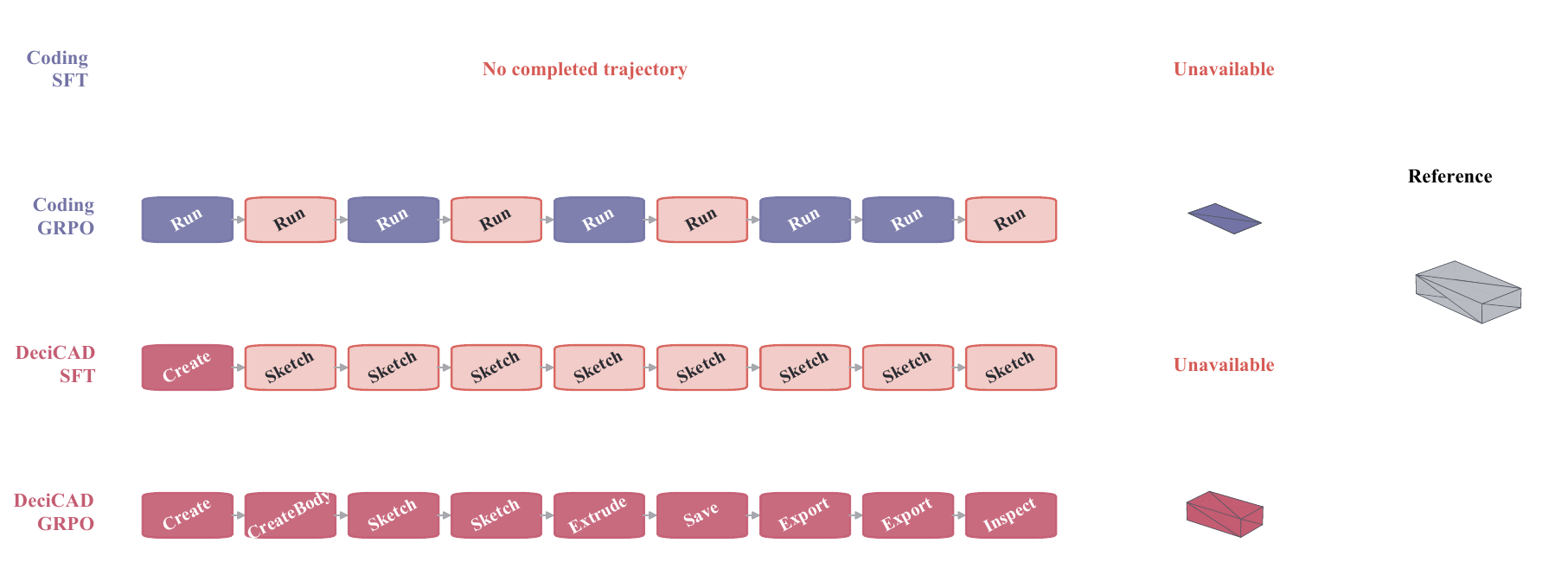}
    \caption{Cross-harness policy trajectories and terminal geometry on the
        same CAD task. Rows show Coding SFT, Coding GRPO, \textsc{DeciCAD} SFT, and
        \textsc{DeciCAD} GRPO. Nodes summarize external calls and their execution
        status; the right side shows generated assets and the reference geometry.
        Unavailable indicates that no evaluable asset was produced.}
    \label{fig:supp-policy-trajectory}
\end{figure*}

\paragraph{Group-relative advantage and update gating.}
Eight rollouts of one task form a GRPO group. A group enters optimization only
if at least one rollout has nonzero geometry reward and its reward standard
deviation is at least \(0.02\). This excludes all-failure groups whose
trajectories differ only in procedural score.

The second stage combines completion and quality advantage at \(0.15:0.85\).
Quality-frontier and anchor candidates must satisfy an absolute quality floor
and remain within the tolerance of the frozen parent anchor. Each group retains
at most one positive and one negative trajectory, balancing task influence.
The policy objective uses learning rate \(3\times10^{-9}\), two-sided clip
\(0.03\), and token-level KL coefficient \(\beta=0.10\) to the frozen
first-stage parent. A stop action receives positive credit only when geometry
exceeds both an absolute threshold and a high within-group quantile; premature
termination receives negative credit.

The learned second-stage LoRA increment is finally applied to the parent at
strength \(0.375\). We preserve the adapter weights and change LoRA alpha from
64 to 24, scaling the learned offset by \(24/64=0.375\). This continuous
control retains the parent's coverage and recovery behavior while adding the
quality preference learned in the second stage.

\paragraph{Online optimization results.}
\textsc{DeciCAD} GRPO raises Availability from \(66.5\%\) to \(73.5\%\), F1
from \(0.7437\) to \(0.7880\), and IoU from \(0.6631\) to \(0.7036\), while
reducing CD from \(0.00782\) to \(0.00570\). Coding GRPO raises Availability
from \(60.5\%\) to \(65.5\%\), but F1 falls from \(0.6460\) to \(0.6253\),
IoU from \(0.6286\) to \(0.5987\), and CD increases from \(0.01130\) to
\(0.02253\). Thus, identical terminal geometry feedback expands executable
coverage in both harnesses, while stable geometry improvement depends on how
credit propagates through the harness-induced decision space.

\paragraph{Observable capabilities in rollouts.}
Figure~\ref{fig:supp-policy-behavior} in the main text reports policy behavior
under six shared categories. Construct creates or modifies geometric
implementation; Inspect executes code, reads state, or checks a result;
Recover changes action after an explicit failure and restores progress;
Invalid Reference and No Progress record invalid targets and repeated
non-progress; and Persist saves or exports the terminal asset. Events are
counted per 100 external decisions so that trajectories of different lengths
are comparable.

Figure~\ref{fig:supp-policy-trajectory} grounds the aggregate difference in
one shared task. Coding SFT produces no completed trajectory. Coding GRPO
emits an asset after repeated runs and repairs, but the output collapses to a
thin plate unlike the reference structure. \textsc{DeciCAD} SFT enters the
domain modeling process but repeated failed Sketch actions leave no evaluable
asset. \textsc{DeciCAD} GRPO instead completes a chain of creation, sketching,
extrusion, persistence, export, and inspection, producing terminal geometry
consistent with the reference. The example clarifies that recovery count alone
does not imply useful recovery: subsequent actions must continue to advance
the actual CAD state.

Aggregate geometry, behavior statistics, and trajectory-level evidence jointly
define the scope of online improvement. GRPO increases valid-asset coverage and
reduces invalid references and repeated non-progress, while small features in
long-horizon tasks remain limited by terminal reward resolution. F1 and IoU
measure surface and volumetric agreement, and CD adds local distance, but
feature count, topological correspondence, repeated structures, and constraint
relations can still differ under similar terminal scores. We therefore use
closed-loop geometry as the primary evidence and the observed behavior and
assets to show how improvement forms within each harness decision space.

\begin{figure*}[t]
    \centering
    \begin{subfigure}[t]{0.485\textwidth}
        \includegraphics[width=\linewidth]{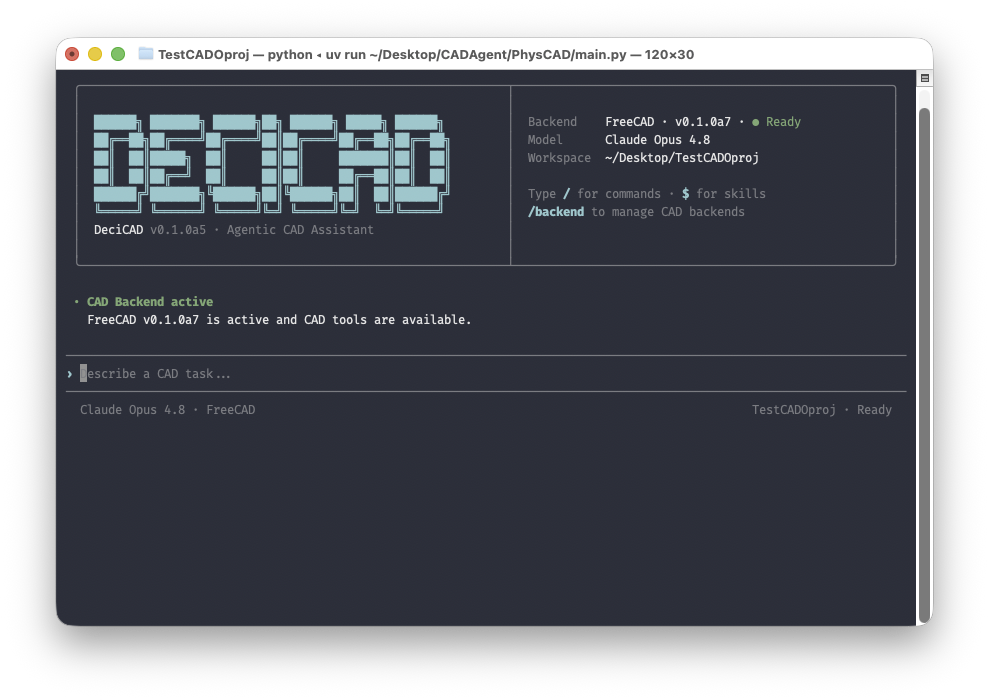}
        \caption{Project workspace and active CAD runtime.}
    \end{subfigure}\hfill
    \begin{subfigure}[t]{0.485\textwidth}
        \includegraphics[width=\linewidth]{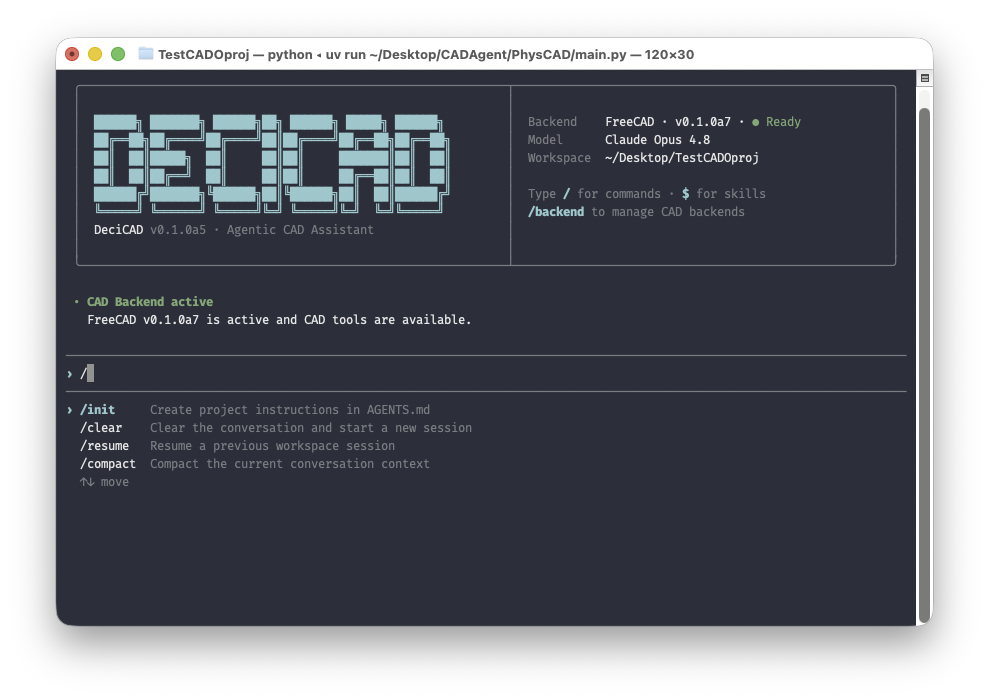}
        \caption{Session commands available from the composer.}
    \end{subfigure}

    \vspace{4pt}
    \begin{subfigure}[t]{0.485\textwidth}
        \includegraphics[width=\linewidth]{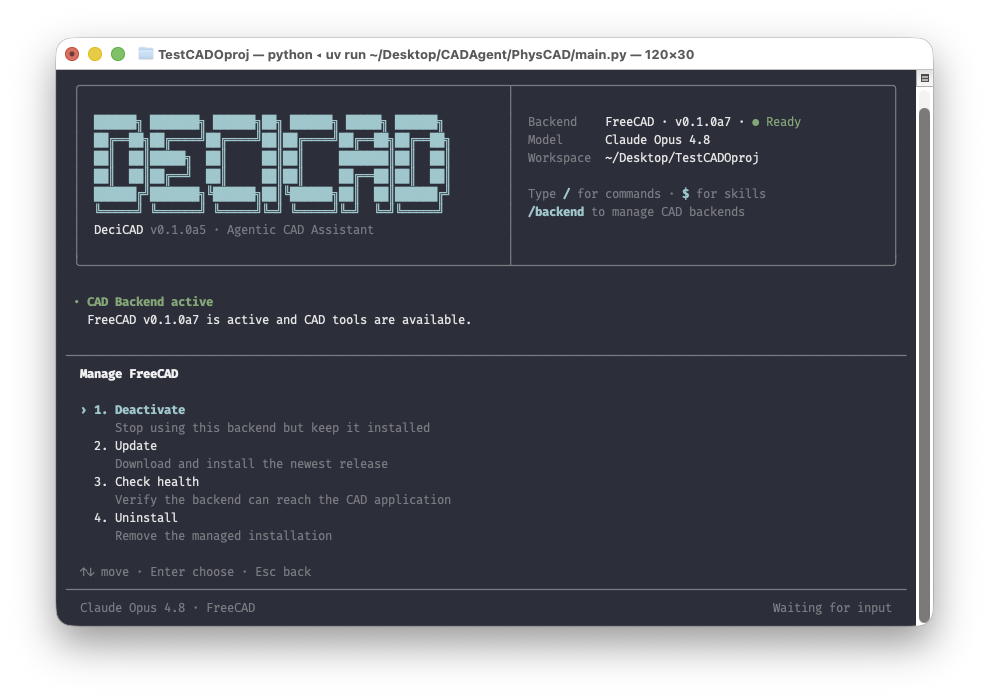}
        \caption{Lifecycle and health controls for the FreeCAD backend.}
    \end{subfigure}\hfill
    \begin{subfigure}[t]{0.485\textwidth}
        \includegraphics[width=\linewidth]{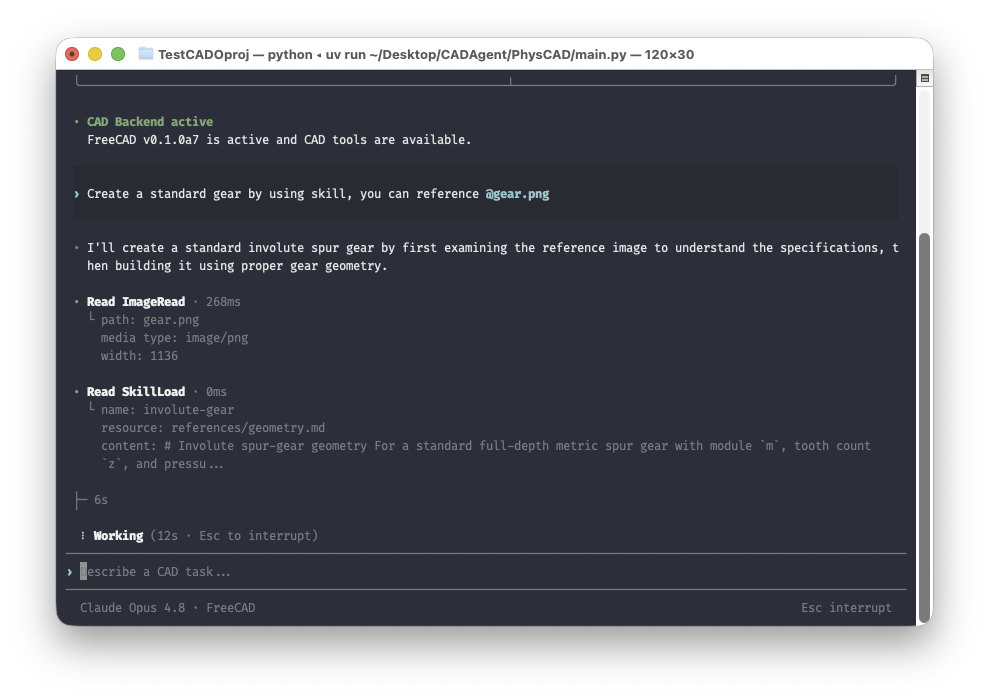}
        \caption{A live task using an image reference and a CAD skill.}
    \end{subfigure}
    \caption{The project-level \textsc{DeciCAD} interface. The workspace keeps
        natural-language interaction, project resources, CAD skills, backend
        state, and streamed execution events in one persistent session. Users
        can inspect and control the runtime while semantic CAD actions continue
        to operate on the active document.}
    \label{fig:supp-decicad-interface}
\end{figure*}

\section{\textsc{DeciCAD}: An Interactive Agentic CAD System}

\textsc{DeciCAD} is distributed as a terminal-native Agentic CAD application.
It organizes model providers, specialized agents, CAD backends, and project
resources in one workspace, allowing a user to create, modify, inspect,
assemble, and export native CAD documents through natural-language
interaction. This section presents the resulting product workflow, its support
for long-horizon multi-agent projects, the backend SDK available to developers,
and our official FreeCAD integration. Its streamed action--observation loop
follows the interaction pattern established by language agents while grounding
each domain action in persistent CAD state
\citep{yao2022react,yang2024sweagent}.

\begin{figure*}[t]
    \centering
    \includegraphics[width=0.94\textwidth]{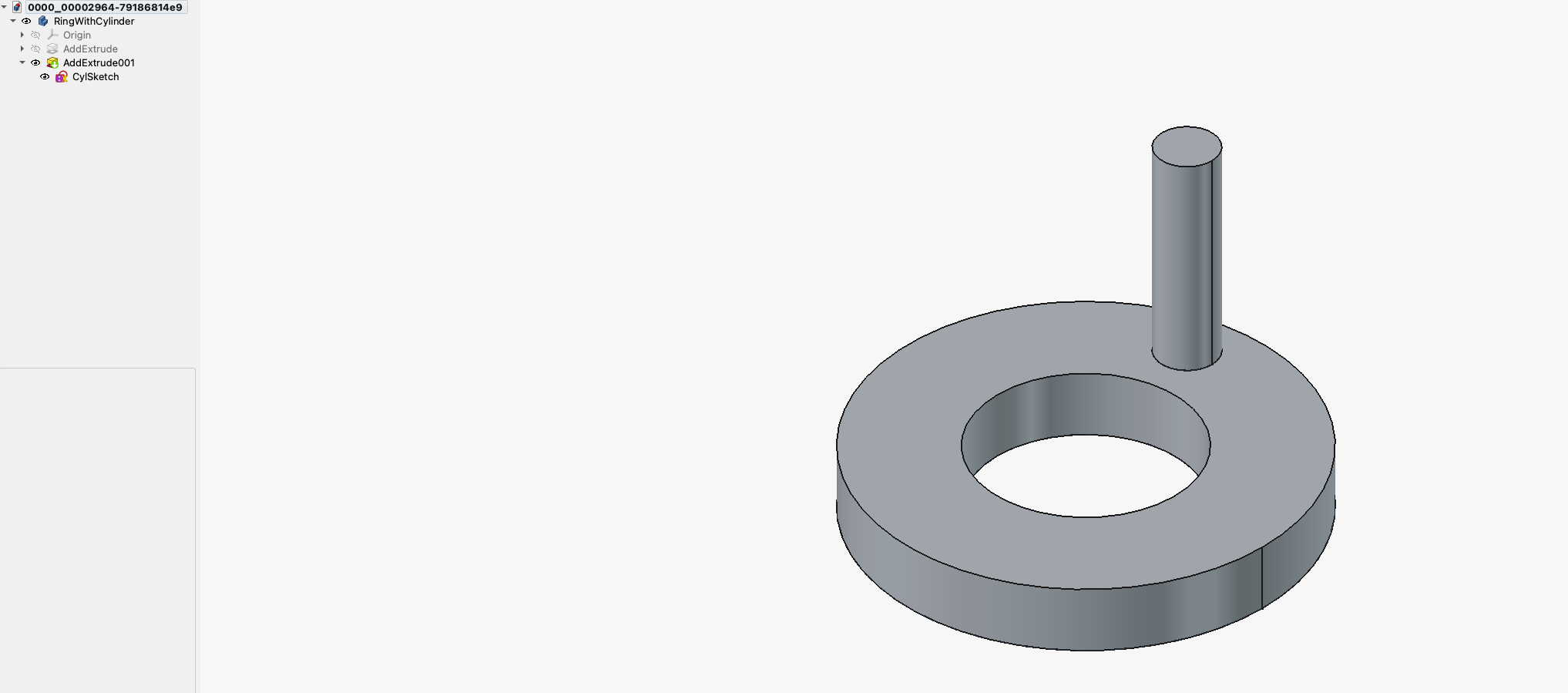}
    \caption{An editable CAD document produced through the FreeCAD backend.
        The viewport shows the generated solid, while the document tree retains
        the body, additive features, and underlying sketch created during the
        interaction. This persistent structure allows later inspection and
        modification through either \textsc{DeciCAD} or the native FreeCAD
        interface.}
    \label{fig:supp-freecad-output}
\end{figure*}

\subsection{From Natural Language to Editable CAD Documents}

\textsc{DeciCAD} provides native release packages for macOS, Linux, and Windows
and uses a managed installer for application updates. Running \texttt{decicad}
inside a project opens that directory as the active workspace;
\texttt{decicad --project /path/to/project} selects it explicitly. During
initial setup, the application configures a model provider and requests trust
before accessing project files or CAD resources.

The composer accepts part modeling, assembly, geometric revision, inspection,
and export requests. An \texttt{@} reference attaches a project resource such
as an image or specification, while \texttt{\$} loads an inline CAD skill that
supplies task-specific procedures. Slash commands expose runtime controls in
the same interface: \texttt{/model} selects the provider, model, and reasoning
effort; \texttt{/backend} installs, activates, updates, and checks a CAD
backend; and \texttt{/status} summarizes the current runtime and token usage.
Figure~\ref{fig:supp-decicad-interface} shows these operations in the
terminal interface.

Model actions and environment results then appear in temporal order. File and
image reads identify the evidence used by the model, skill loads expose the
selected domain procedure, and CAD calls report their execution state and
compact result. Successful calls immediately update the persistent document;
failed calls return information that supports argument correction, targeted
inspection, or selection of another action. The user can interrupt execution
or add a follow-up request at any point, and subsequent actions continue from
the current CAD state rather than regenerating the artifact from scratch.

The resulting artifact is also available in the native CAD application. Users
can inspect the three-dimensional view and access the created Body, Sketch,
Feature, Assembly, and Component objects in the document tree. The document can
be saved in its native editable format or exported as a standard geometry
asset, preserving a path from agent generation to manual inspection and
continued engineering work.

\subsection{Control Plane for Multi-Agent CAD Work}

The interactive loop is supported by an application-level control plane that
coordinates agents, tools, resources, permissions, and persistent project
state. CAD backends execute domain operations, but they do not decide which
agent may act, which calls may run concurrently, or whether an action was based
on a current document revision. Keeping these responsibilities in the control
plane allows the same orchestration policy to operate across compatible CAD
systems.

\paragraph{Agent orchestration.}
For larger projects, a leader agent retains the user conversation and overall
goal while delegating bounded subtasks to four built-in roles. The Designer
turns functional and geometric intent into a feature and interface plan; the
Modeler owns one complete part or a dependency-ordered modeling stage; the
Assembler inserts components and manages placements, grounding, mates, and
solve state; and the Verifier performs an independent, predominantly read-only
evidence pass over the final document and exported assets. Each child receives
the original request together with a bounded assignment, separate context,
role-specific instructions, tool allowlist, and interaction budget. A child
cannot recursively create more agents or modify the leader's task board. Its
final response is returned as evidence together with the resources changed
during execution, leaving acceptance and follow-up decisions with the leader.

\paragraph{Task and dependency state.}
Multi-stage work is represented by a compact project task graph. Tasks record
their status, description, dependencies, blocking relations, and completion
evidence. This graph separates conversational planning from execution state:
the leader can recover which components are ready, which assembly step is
blocked, and which verification remains after context compaction or session
resumption. Dependency updates are serialized by the control plane, preventing
two agents from concurrently producing inconsistent task-board transitions.

\paragraph{Resource resolution and authorization.}
Every tool specification declares static resource behavior, including the
arguments that identify reads, writes, observations, creations, or filesystem
materialization. Before execution, the resource manager resolves those fields
to concrete document, entity, topology, assembly, or file keys. Each agent also
maintains the revisions it has actually observed. A write is authorized only
when its dependent resources have been observed at their current revisions;
if another agent has modified one of them, the stale write is rejected with a
recoverable conflict and the agent must inspect the updated state. File edits
follow the same rule and require a complete prior read. This observation-based
authorization prevents a plausible but outdated model context from silently
overwriting newer CAD or project state.

\paragraph{Concurrency and commit.}
Independent child assignments can be emitted together and run concurrently.
Tool specifications additionally assign calls to execution lanes and mark them
as parallel or exclusive. After resource resolution, write scopes are sorted
and locked in a deterministic order; calls with disjoint scopes proceed in
parallel, while overlapping CAD writes are serialized. Authorization is
revalidated immediately before execution to close the interval between
scheduling and dispatch. A successful backend result reports concrete
\texttt{RuntimeResourceEffects}; the control plane commits these effects,
advances resource generations, records the responsible agent, and invalidates
stale observations held elsewhere. A failed call does not claim a state
transition. If transport fails after a potentially mutating call, the affected
scope is conservatively invalidated so that later decisions re-inspect it.

This protocol supports useful concurrency without treating the CAD document as
an ordinary shared text buffer. Different Modelers may construct independent
parts simultaneously, after which an Assembler consumes their committed
outputs. Calls that touch the same body, assembly, or output file are ordered,
and each downstream agent receives only the document revisions known to be
valid at delegation time.

\paragraph{Permissions and project boundaries.}
Workspace trust and permission checks are evaluated before protected files or
CAD resources are exposed to an agent. The user remains the authority for
access that crosses the active project boundary or carries elevated effects.
Because permissions are evaluated in the application rather than reimplemented
by every backend, adding a CAD integration does not create a second,
backend-specific authorization path.

\paragraph{Sessions, memory, and observability.}
The control plane projects model messages, tool calls, execution outcomes,
permissions, resource commits, task changes, and compaction events onto an
append-only rollout. The same event stream drives the terminal interface and
provides the state required for replay and session restoration.
\texttt{/resume} reconstructs an earlier project session, while
\texttt{/compact} replaces older interaction with a bounded summary and keeps
recent decisions available to the model. Durable memory is extracted from
completed root rollouts with links to its source turns, separating persistent
design facts from transient dialogue. Skills add reusable CAD procedures,
hooks connect project workflows to agent and tool lifecycle events, and a
project-level \texttt{AGENTS.md} carries coordinate conventions, naming rules,
and delivery requirements into subsequent sessions. Together, these services
allow long-running CAD work to remain recoverable, inspectable, and governed by
one consistent project state.

\begin{table*}[t]
    \centering
    \scriptsize
    \setlength{\tabcolsep}{5pt}
    \renewcommand{\arraystretch}{1.12}
    \begin{tabular}{p{0.105\textwidth} p{0.36\textwidth} p{0.47\textwidth}}
        \toprule
        Domain & Canonical operations                                                 & Supported behavior \\
        \midrule
        Document
               & \texttt{DocumentCreate}, \texttt{DocumentOpen},
        \texttt{DocumentSave}, \texttt{DocumentExport},
        \texttt{DocumentClose}, \texttt{DocumentDeleteEntity}
               & Create and open native FreeCAD documents; save editable documents;
        export selected targets to STEP or STL; close documents; and delete
        unused entities without downstream dependents.                                                     \\

        Inspection
               & \texttt{InspectList}, \texttt{InspectGet}
               & Page through bounded document, entity, geometry, topology, and
        assembly collections, then retrieve canonical detail for one stable
        reference.                                                                                         \\

        Entity
               & \texttt{EntitySetPlacement}
               & Apply rigid transforms in world or parent space to placeable CAD
        entities and component instances.                                                                  \\

        Sketch
               & \texttt{SketchCreate}, \texttt{SketchAddLines},
        \texttt{SketchAddCircle}, \texttt{SketchAddArc},
        \texttt{SketchAddPoint}, \texttt{SketchConstrain},
        \texttt{SketchProject}, \texttt{SketchPattern},
        \texttt{SketchTrim}
               & Create plane-bound sketches; add local lines, circles, arcs, and
        points; apply geometric and dimensional constraints; project
        external geometry; create linear or circular patterns; and trim at
        intersections.                                                                                     \\

        Solid
               & \texttt{SolidCreateBody}, \texttt{SolidExtrude},
        \texttt{SolidRevolve}, \texttt{SolidSweep}, \texttt{SolidLoft},
        \texttt{SolidBoolean}, \texttt{SolidRepeat},
        \texttt{SolidFillet}, \texttt{SolidChamfer}, \texttt{SolidShell}
               & Create feature bodies; add or cut material by extrusion, revolution,
        sweep, or loft; perform body booleans; repeat or mirror features and
        bodies; and apply fillets, chamfers, and inward or outward shells.                                 \\

        Assembly
               & \texttt{AssemblyCreate}, \texttt{AssemblyAddComponent},
        \texttt{AssemblySetGrounded}, \texttt{AssemblyMateAdd},
        \texttt{AssemblyMateRemove}, \texttt{AssemblySolve}
               & Create assemblies; insert placed component instances; ground or
        release components; add and remove geometry-referenced mates; and
        solve the resulting assembly constraints.                                                          \\
        \bottomrule
    \end{tabular}
    \caption{Canonical CAD operations implemented by the current FreeCAD
        backend. The 34 tools constitute the complete SDK backend registry used
        by \textsc{DeciCAD}.}
    \label{tab:supp-freecad-tools}
\end{table*}

\subsection{Adapting a CAD Backend with the DeciCAD SDK}

The DeciCAD SDK is the contract boundary shared by the application and backend
implementations. It defines canonical tool names, typed input and output
models, geometry types, opaque object references, permission requirements,
runtime resource effects, structured success values, and model-facing errors.
The application remains responsible for model interaction, agent orchestration,
permissions, sessions, and resource coordination. A backend developer maps the
canonical state transitions to the native API of a target CAD system.

This separation makes the SDK a semantic contract rather than a convenience
wrapper around one vendor API. Canonical tools are grouped into document,
inspection, entity, sketch, solid, and assembly domains. Their Pydantic models
normalize units, frames, geometric values, operation modes, and result shapes,
while \texttt{DocumentRef} and \texttt{EntityRef} preserve backend-owned object
identity as opaque values. A backend imports these specifications instead of
copying or redefining them. Consequently, field descriptions, model-facing tool
schemas, resource declarations, and output validation remain identical across
integrations, while implementation instructions can still describe behavior
specific to the target CAD application.

A backend first imports a canonical \texttt{ToolSpec} and binds it to an async
implementation with \texttt{@tool}. Inputs have already passed the spec's
Pydantic validation before execution. The implementation returns the canonical
output model together with accurate \texttt{RuntimeResourceEffects}; these
effects let the application update resource versions, permissions, and locks.
Recoverable CAD failures use a compact \texttt{ToolError}, keeping operational
failures distinct from backend defects and transport errors. The following
example summarizes the binding used for native document creation.

\begin{lstlisting}[language=Python,caption={Minimal binding of a canonical SDK tool to a native CAD operation.},label={lst:supp-backend-tool}]
CREATE = DOCUMENT_TOOLS["DocumentCreate"]

@tool(spec=CREATE,
      instructions="Create one native document.")
async def create_document(arguments):
    result = await execute_cad_write(
        operation="new_document",
        arguments=arguments,
    )
    if result.ok:
        record_document_create_effect(result)
    return result
\end{lstlisting}

The backend then declares a manifest containing its backend, SDK, protocol, and
tool-contract versions. A \texttt{BackendServer} registers the complete
canonical tool roster and exposes a health check for the target CAD runtime, as
illustrated in Listing~\ref{lst:supp-backend-server}. During activation, a
\texttt{BackendClient} starts the isolated process and verifies the manifest,
protocol version, tool registry, contract digest, and health response. Only
after this initialization does it bind remote executors as ordinary SDK tools
inside \textsc{DeciCAD}.

\begin{lstlisting}[language=Python,caption={Registering tool domains and exposing backend health through the SDK server.},label={lst:supp-backend-server}]
server = BackendServer(
    manifest=MANIFEST,
    health_check=health_check,
)
for domain in (document, entity, inspection,
               sketch, solid, assembly):
    server.register(domain.TOOLS)
await server.serve()
\end{lstlisting}

The server reserves standard output for newline-delimited JSON-RPC frames and
sends operational diagnostics to standard error, keeping transport parseable
under subprocess supervision. It refuses to initialize when a canonical tool
is missing, a tool name is duplicated, an imported spec differs from the SDK,
or the manifest targets incompatible SDK, protocol, or tool-contract versions.
Successful results are validated against the exact output model before they
cross the process boundary; operational failure results contain a
\texttt{ToolError} and cannot claim resource effects.

These constraints reduce the work required from a backend developer. The SDK
provides model schemas, validation, RPC framing, process lifecycle, contract
negotiation, error serialization, and the client-side \texttt{Tool} bindings.
The developer concentrates on object resolution, native CAD calls, readiness,
and accurate resource effects. A real \texttt{BackendClient}/
\texttt{BackendServer} initialization test exercises the same registry and
wire contract used by the product, catching integration drift before an agent
can invoke the backend. The application and backend may then be released
independently whenever their declared contracts remain compatible; an
incompatible release fails explicitly at activation instead of producing
ambiguous behavior midway through a modeling task.

\subsection{FreeCAD as the Reference Backend}

Our official reference backend instantiates this SDK boundary for FreeCAD,
and all experiments in this paper use this backend. The
backend manager shown in Figure~\ref{fig:supp-decicad-interface} controls its
installation, activation, health checking, update, and removal through an
isolated managed environment. In development mode, the setup command also
installs a FreeCAD Addon into the versioned user directory discovered from the
actual FreeCAD runtime, supporting system, portable, and custom installations.

The integration uses two local communication layers. The SDK
\texttt{BackendClient} communicates with an independent backend process; that
process connects through an authenticated loopback bridge to FreeCAD's
embedded Python runtime. The bridge executes CAD operations on the FreeCAD GUI
thread, wraps document writes in native transactions, and encodes the result
under the canonical output model. The bridge panel exposes connection state,
dynamic endpoint, protocol version, age of the last authenticated request, and
start, stop, restart, and automatic-start controls, while the application sees
only the stable SDK contract. Keeping the bridge protocol separate is necessary
because FreeCAD owns the embedded interpreter and GUI event loop; it also keeps
FreeCAD object lookup, argument decoding, result encoding, transaction handling,
and native diagnostics out of the application process.

Table~\ref{tab:supp-freecad-tools} summarizes the complete canonical operation
set currently implemented by the reference backend. Stable opaque references
connect documents, entities, geometry, topology, features, and assembly objects
across calls. Successful tools report concrete resource effects, and failures
are mapped into shared error categories, so the general agent and resource
runtime do not depend on the FreeCAD Python API.

Inspection is hierarchical rather than a full document dump. Paginated list
operations expose document structure and stable references, after which get
operations retrieve detailed properties, placement, geometry, topology, or
assembly state only for selected objects. Modeling tools return compact native
results and resource effects immediately after execution. This supports the
same active and passive observation pattern used by the DeciCAD decision loop
while retaining direct access to FreeCAD's editable feature history.

Figure~\ref{fig:supp-freecad-output} shows a document state produced through
this path. The three-dimensional solid is accompanied by its body, additive
features, and sketch in the FreeCAD tree, rather than being exposed only as a
flattened mesh or image.

For batch evaluation and data collection, multiple workers run isolated
FreeCAD and backend processes with separate bridge endpoints, authentication
tokens, instance identifiers, and result caches. A worker resets and verifies
all open documents before accepting another episode. This isolation prevents
CAD state from leaking across tasks while retaining the same backend contract
used by the interactive product.

Parallelism is therefore process-isolated rather than implemented by issuing
concurrent CAD operations inside one FreeCAD instance. Each worker owns one
FreeCAD process, one Addon bridge, and one matching backend process. Distinct
endpoint descriptor files lead to independent dynamic loopback ports and
authentication tokens. This design supports parallel interactive services, 
trajectory collection, and benchmark evaluation without sharing mutable 
FreeCAD state across tasks.

\begin{table}[h]
    \centering
    \footnotesize
    \setlength{\tabcolsep}{3pt}
    \begin{tabular}{@{}lccc@{}}
        \toprule
        Capability & FreeCAD & SolidWorks & AutoCAD \\
        \midrule
        Documents and objects & \checkmark & \checkmark & \checkmark \\
        Inspection and query & \checkmark & \checkmark & \checkmark \\
        Parametric modeling & \checkmark & \checkmark & \\
        Assemblies and mates & \checkmark & \checkmark & \\
        2D drafting & & & \checkmark \\
        Exchange formats & STEP, STL & STEP, STL & DXF \\
        \bottomrule
    \end{tabular}
    \caption{Capabilities supported by the three CAD backends. All backends
        implement the same SDK contract; all experiments in this paper use the
        FreeCAD backend.}
    \label{tab:supp-backend-capabilities}
\end{table}

\subsection{Additional Backends}

The same SDK contract also supports two commercial CAD systems, SolidWorks and
AutoCAD. Each backend declares the capabilities it implements, and the
application loads the corresponding tools when it connects; agent
orchestration, permissions, resource coordination, and sessions are shared
across all backends. Table~\ref{tab:supp-backend-capabilities} summarizes the
capabilities supported by the three backends.

\end{document}